\DeclareMathOperator*{\argmin}{arg\,min}
\newcommand\numberthis{\addtocounter{equation}{1}\tag{\theequation}}
\newtheorem{thm}{Theorem}
\begin{document}
% \renewcommand\thelinenumber{\color[rgb]{0.2,0.5,0.8}\normalfont\sffamily\scriptsize\arabic{linenumber}\color[rgb]{0,0,0}}
% \renewcommand\makeLineNumber {\hss\thelinenumber\ \hspace{6mm} \rlap{\hskip\textwidth\ \hspace{6.5mm}\thelinenumber}}
% \linenumbers
\pagestyle{headings}
\mainmatter
\def\ECCVSubNumber{4324}  % Insert your submission number here

\title{Variational Diffusion Autoencoders with Random Walk Sampling} % Replace with your title

% INITIAL SUBMISSION 
\begin{comment}
\titlerunning{ECCV-20 submission ID \ECCVSubNumber} 
\authorrunning{ECCV-20 submission ID \ECCVSubNumber} 
\author{Anonymous ECCV submission}
\institute{Paper ID \ECCVSubNumber}
\end{comment}
%******************

% CAMERA READY SUBMISSION
%\begin{comment}
\titlerunning{Variational Diffusion Autoencoders with Random Walk Sampling}

\author{Henry Li\inst{1}\thanks{Equal contribution.} \and
Ofir Lindenbaum\inst{1}$^{\star}$ \and
Xiuyuan Cheng\inst{2} \and
Alexander Cloninger\inst{3}}
\authorrunning{H. Li, et al.}
% First names are abbreviated in the running head.
% If there are more than two authors, 'et al.' is used.
%
\institute{Applied Mathematics, Yale University, New Haven, CT 06520, USA \email{\{henry.li,ofir.lindenbaum\}@yale.edu} \and
Department of Mathematics, Duke University, Durham, NC 27708, USA \email{xiuyuan.cheng@duke.edu} \and
Department of Mathematics and Halicio{\u g}lu Data Science Institute, University of California San Diego, La Jolla, CA 92093, USA\\
\email{acloninger@ucsd.edu}
}

%\end{comment}
%******************
\maketitle

\begin{abstract}
Variational autoencoders (VAEs) and generative adversarial networks (GANs) enjoy an intuitive connection to manifold learning: in training the decoder/generator is optimized to approximate a homeomorphism between the data distribution and the sampling space. This is a construction that strives to define the data manifold. A major obstacle to VAEs and GANs, however, is choosing a suitable prior that matches the data topology. Well-known consequences of poorly picked priors are posterior and mode collapse. To our knowledge, no existing method sidesteps this user choice.
Conversely, \textit{diffusion maps} automatically infer the data topology and enjoy a rigorous connection to manifold learning, but do not scale easily or provide the inverse homeomorphism (i.e. decoder/generator). We propose a method \footnote{\url{https://github.com/lihenryhfl/vdae}} that combines these approaches into a generative model that inherits the asymptotic guarantees of \textit{diffusion maps} while preserving the scalability of deep models. We prove approximation theoretic results for the dimension dependence of our proposed method. Finally, we demonstrate the effectiveness of our method with various real and synthetic datasets.
\keywords{deep learning, variational inference, manifold learning, image and video synthesis, generative models, unsupervised learning}
\end{abstract}

\section{Introduction}

Generative models such as variational autoencoders (VAEs, ~\cite{kingma2013auto}) and generative adversarial networks (GANs, \cite{goodfellow2014generative}) have made it possible to sample remarkably realistic points from complex high dimensional distributions at low computational cost. 
% VAEs jointly learn both a latent encoding and a generative model for data by combining variational inference with an auto-encoding architecture, while GANs use a game theoretic approach to simultaneously train a generator and an adversarial teacher. 
While the theoretical framework behind the two methods are different --- one is derived from variational inference and the other from game theory --- they both involve learning smooth mappings from a user-defined prior $p(z)$ to the data $p(x)$.

% The manifold hypothesis states that high dimensional data often lie on low dimensional manifolds.
When $p(z)$ is supported on a Euclidean space (e.g. $p(z)$ is Gaussian or uniform) and the $p(x)$ is supported on a manifold (i.e. the Manifold Hypothesis, see \cite{manifoldhypothesis1,manifoldhypothesis2}), VAEs and GANs become manifold learning methods, as manifolds themselves are defined as sets that are locally homeomorphic to Euclidean space. Thus the learning of such homeomorphisms may shed light on the success of VAEs and GANs in modeling complex distributions. 

This connection to manifold learning also offers a reason why these generative models fail --- when they do fail. Known as \textit{posterior collapse} in VAEs \cite{broken_elbo,info_vae,posterior_collapse_1,posterior_collapse_2} and \textit{mode collapse} in GANs \cite{goodfellow_mode_collapse}, both describe cases where the learned mapping collapses large parts of the input to a single point in the output. This violates the bijective requirement of a homeomorphism. It also results in degenerate latent spaces and poor generative performance. 

A major cause of such failings is when the geometries of the prior and target data do not agree. We explore this issue of \textit{prior mismatch} and previous treatments of it in Section \ref{sec:motivation}.
% Part of the reason is the observed lack of an objective measure of generative model quality \cite{theis2015note}.
Given their connection to manifolds, it is natural to draw from classical approaches in manifold learning to improve deep generative models. One of the most principled methods is kernel-based manifold learning
\cite{kernelpca,roweis2000nonlinear,belkin2002laplacian}. This involves embedding data drawn from a manifold $X \subset \mathcal{M}_X$ into a space spanned by the leading eigenfunctions of a kernel on $\mathcal{M}_X$. We focus specifically on \textit{diffusion maps}, where \cite{DM:lafon} show that normalizations of the kernel define a diffusion process that has a uniform stationary distribution over the data manifold. Therefore, drawing from this stationary distribution samples uniformly from the data manifold. This property was used in \cite{sugar} to smoothly interpolate between missing parts of the manifold. However, despite its strong theoretical guarantees, \textit{diffusion maps} are poorly equipped for large scale generative modeling as they do not scale well with dataset size. Moreover, acquiring the inverse mapping from the embedding space --- a crucial component of a generative model --- is traditionally a very expensive procedure \cite{l1,preimage,mika}.

In this paper we address issues in variational inference and manifold learning by combining ideas from both. The theory in manifold learning allows us to recognize and correct \textit{prior mismatch}, whereas variational inference provides a method to construct a generative model, which also offers an efficient approximation to the inverse \textit{diffusion map}.

\textbf{Our contributions:} \textbf{1}) We introduce the locally bi-Lipschitz property, a necessary condition of a homeomorphism, for measuring the stability of a mapping between latent and data distributions. \textbf{2}) 
We introduce variational diffusion autoencoders (VDAEs), a class of variational autoencoders that, instead of directly reconstructing the input, have an encoder-decoder that approximates one discretized time-step of the diffusion process on the data manifold (with respect to a user defined kernel $k$). \textbf{3}) We prove approximation theoretic bounds for deep neural networks learning such diffusion processes, and show that these networks define random walks with certain desirable properties, including well-defined transition and stationary distributions. \textbf{4}) Finally, we demonstrate the utility of the VDAE framework on a set of real and synthetic datasets, and show that they have superior performance and satisfy the locally bi-Lipschitz property.

% \begin{figure}
% \includegraphics{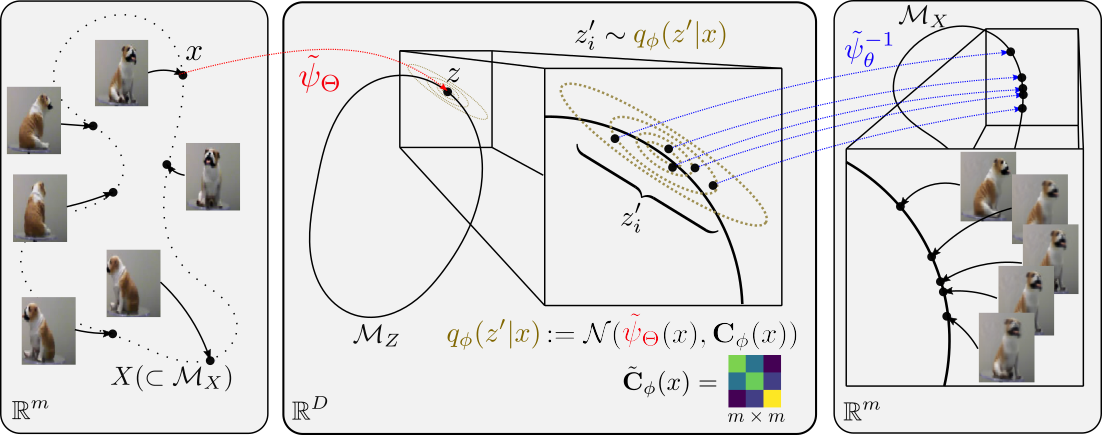}

% \caption{Generated points from VDAE at various diffusion times. Top row: Synthetic loop dataset. Bottom row: MNIST handwritten digits dataset.}
% \label{fig:images}
% \end{figure}
\begin{figure}[tb!]
\begin{center}
%\framebox[4.0in]{$\;$}
\includegraphics[width=.9\textwidth]{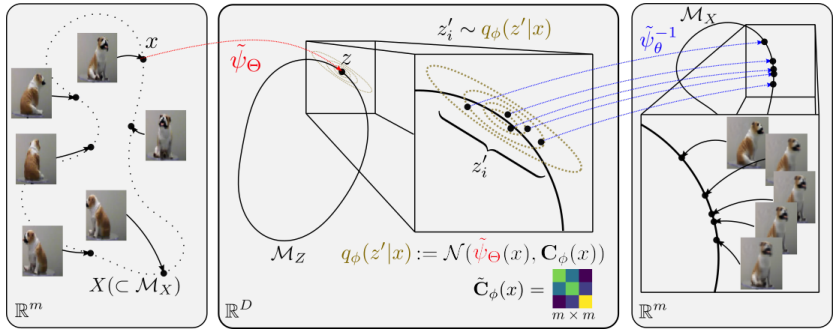}
\end{center}
\caption{A diagram depicting one step of the diffusion process modeled by the variational diffusion autoencoder (VDAE). The \textit{diffusion} and \textit{inverse diffusion maps} $\psi, \psi^{-1}$, as well as the covariance $\mathbf{C}$ of the random walk on $\mathcal{M}_Z$, are all approximated by neural networks. Images on the leftmost panel are actually generated by our method.}
\end{figure}

\section{Background}
\label{sec:background}
% In both cases, it can be shown that a homeomorphism would not exist between the latent prior and the target distribution. [NEED TO REFINE THE TWO FACTORS.] Both produce the same outcome; the manifold cannot be learned, and the resulting model suffers.
% % may either have trouble covering the whole support of the true distribution or generate many points off the support.

% \subsection{Variational Inference}
\textbf{Variational inference} (VI, \cite{vi_1,vi_2}) combines Bayesian statistics and latent variable models to approximate some probability density $p(x)$. VI exploits a latent variable structure in the assumed data generation process, that the observations $x \sim p(x)$ are conditionally distributed given unobserved latent variables $z$. By modeling the conditional distribution, then marginalizing over $z$, as in
\begin{equation}
% $
    \label{eq:marginalization}
    p_\theta(x) = \int_z p_\theta(x|z)p(z)dz,
% $
\end{equation}
we obtain the model evidence, or likelihood that $x$ could have been drawn from $p_\theta(x)$. Maximizing the model evidence (Eq. \ref{eq:marginalization}) leads to an algorithm for finding likely approximations of $p(x)$. The cost of computing this integral scales exponentially with the dimension of $z$ and thus becomes intractable with high latent dimensions. Therefore we replace the model evidence (Eq. \ref{eq:marginalization}) with the evidence lower bound (ELBO):
\begin{align*}
    \log p_\theta(x)
    \geq - D_{KL}(q(z|x)||p(z)) + \mathbb{E}_{z \sim q(z|x)}[\log p_\theta(x|z)]
    , \numberthis \label{eq:elbo}
\end{align*}
where $q(z|x)$ is usually an approximation of $p_\theta(z|x)$. Maximizing the ELBO is sped up by taking stochastic gradients \cite{svi}, and further accelerated by learning a global function approximator $q_\phi $ in an autoencoding structure \cite{kingma2013auto}.

% \subsection{Diffusion Maps} \label{sec:diffusionmaps}
\textbf{Diffusion maps} \cite{DM:lafon} refer to a class of kernel methods that perform non-linear dimensionality reduction on a set of observations $X \subseteq \mathcal{M}_X$, where $\mathcal{M}_X$ is the assumed data manifold equipped with measure $\mu$. Let  $x, y \in X$; given a symmetric and non-negative kernel $k$,
\textit{diffusion maps} involve analyzing the induced random walk on the graph of $X$, where the transition probabilities $P(y|x)$ are captured by the probability kernel
% \begin{equation}
%     p(x,y) = \frac{k(x,y)}{d(x)},
% \end{equation}
$p(x,y) = k(x,y)/d(x),$
where $d(x) = \int_X k(x,y) d\mu(y)$ is the weighted degree of $x$.
The diffusion map itself is defined as
% \begin{equation}
$
    \psi_D(x) := [
\lambda_1 f_1(x),  
\lambda_2 f_2(x), 
..., 
\lambda_D f_D(x)],
$
% \end{equation}
where $\{f_i\}_{1\leq i\leq D}$ and $\{\lambda_i\}_{1 \leq i \leq D}$ are the first $D$ eigenfunctions and eigenvalues of $p$. 
An important construction in \textit{diffusion maps} is the \textit{diffusion distance}:
\begin{equation}
    D(x, y)^2 = \int (p(x, u) - p(y, u))^2 \frac{d\mu(u)}{\pi(u)},
\end{equation}
where $\pi(u) = d(u)/\sum_{z \in X} d(z)$ is the stationary distribution of $u$. Intuitively, $D(x,y)$ measures the difference between the diffusion processes emanating from $x$ and $y$. A key property of $\psi_D$ is that it embeds the data $X \in \mathbb{R}^m$ into the Euclidean space $\mathbb{R}^D$ so that the diffusion distance is approximated by Euclidean distance (up to relative accuracy $\frac{\lambda_D}{\lambda_1}$).
Therefore, the arbitrarily complex random walk induced by $k$ on $\mathcal{M}_X$ becomes an isotropic Gaussian random walk on $\psi(\mathcal{M}_X)$. 

\textbf{SpectralNet} \cite{shaham2018spectralnet} is a neural network approximation of the \textit{diffusion map} $\psi_D$ that enjoys a major computational speedup. 
% Classically, $\psi_D$ could only be computed via the eigendecomposition of $k$, which is an $O(n^3)$ time operation. This restricts the application of \textit{diffusion maps} to small datasets. On larger datasets the Nystr{\"o}m extension \cite{NIPS2000_1866} is often applied, but it requires a careful selection of landmark points. However, \cite{shaham2018spectralnet} proposed an algorithm that approximates the function $\psi_D$ itself with deep neural networks. 
The eigenfunctions $f_1, f_2, \dots, f_D$ that compose $\psi_D$ are learned by optimizing a custom loss function that stochastically maximizes the Rayleigh quotient for each $f_i$ while enforcing the orthogonality of all $f_i \in \{f_n\}_{n=1}^D$ via a custom orthogonalization layer. As a result, the training and computation of $\psi$ is linear in dataset and model size (as opposed to $O(n^3)$). We will use this algorithm to obtain our diffusion embedding prior.

%\textcolor{red}{Summarize Jones Maggioni idea and main theorem - XC will add}

\textbf{Locally bi-Lipschitz coordinates by kernel eigenfunctions.} The construction of local coordinates of Riemannian manifolds $\mathcal{M}_X$ by eigenfunctions of the diffusion kernel is analyzed in \cite{jones2008manifold}.
They establish, for all $x \in \mathcal{M}_X$, the existence of some neighborhood $U(x)$ and $d$ spectral coordinates given $U(x)$ that define a bi-Lipschitz mapping from $U(x)$ to $\mathbb{R}^d$. With a smooth compact Riemannian manifold, we can let $U(x) = B(x, \delta r_\text{in})$, where $\delta$ is some constant and the \textit{inradius} $r_\text{in}$ is the radius of the largest ball around $x$ still contained in $\mathcal{M}_X$. Note that $\delta$ is uniform for all $x$,
whereas the indices of the $d$ spectral coordinates as well as the local bi-Lipschitz constants may depend on $x$ and are order $O(r_\text{in}^{-1})$.
For completeness we give a simplified statement of the \cite{jones2008manifold} result in the Appendix.

Using the compactness of the manifold,
one can always cover the manifold with $m$ many neighborhoods (geodesic balls) on which the bi-Lipschitz property in \cite{jones2008manifold} holds. 
As a result, there are a total of $D$ spectral coordinates, $D \le m d$
(in practice $D$ is much smaller than $md$, since the selected spectral coordinates in the proof of \cite{jones2008manifold} tend to be low-frequency ones, and thus the selection on different neighborhoods tend to overlap),
such that on each of the $m$ neighborhoods,
there exists a subset of $d$ spectral coordinates out of the $D$ ones which are bi-Lipschitz on the neighborhood.  We observe empirically that the bi-Lipschitz constants can be bounded uniformly from below and above (see Section \ref{sec:empirical_bilip}).

\begin{figure}[t]
\begin{center}
\subfigure[$\mathtt{t} = 1$]{
\includegraphics[width=0.23\textwidth,height=.15\textwidth]{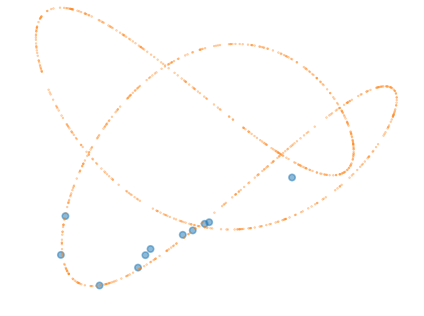}}
\subfigure[$\mathtt{t} = 2$]{
\includegraphics[width=0.23\textwidth,height=.15\textwidth]{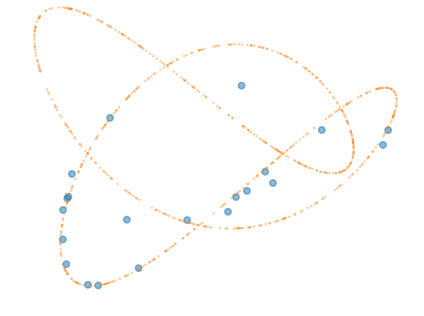}}
\subfigure[$\mathtt{t} = 4$]{
\includegraphics[width=0.23\textwidth,height=.15\textwidth]{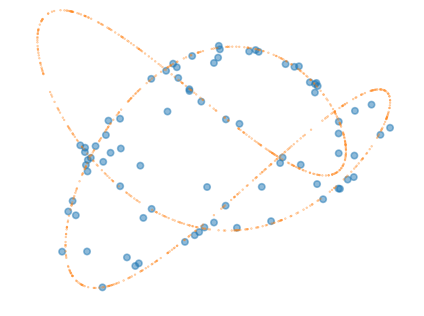}}
\subfigure[$\mathtt{t} = 8$]{
\includegraphics[width=0.23\textwidth,height=.15\textwidth]{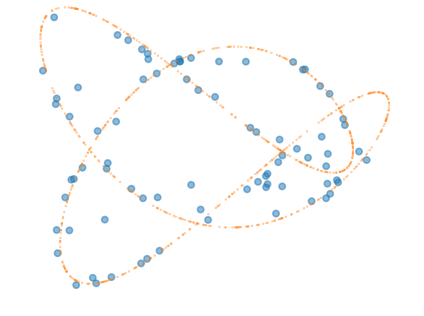}}
\end{center}
\caption{An example of the diffusion random walk simulated by our method on a 3D loop dataset. $\mathtt{t}$ is the number of steps taken in the random walk.}
\label{fig:randomwalk}
\end{figure}

\section{Motivation and related work}
\label{sec:motivation}
In this section we justify the key idea of our method: diagnosing and correcting \textit{prior mismatch}, a failure case of VAE and GAN training when $p(z)$ and $p(x)$ are not topologically isomorphic. Intuitively, we would like the latent distribution to have three nice properties: (1) \textbf{\textit{realizability}}, that every point in the data distribution can be realized as a point in the latent distribution; (2) \textbf{\textit{validity}}, that every point in the latent distribution maps to a unique valid point in the data distribution (even if it is not in the training set); and (3) \textbf{\textit{smoothness}}, that points in the latent distribution vary in the intrinsic coordinate system in some smooth and coherent way.

These properties are precisely those enjoyed by a latent distribution that is homeomorphic to the data distribution. \textit{Validity} implies injectivity, \textit{realizability} implies surjectivity, \textit{smoothness} implies continuity; and a mapping between topological spaces that is injective, surjective, and continuous is a homeomorphism. Therefore, studying algorithms that encourage approximations of homeomorphisms is of fundamental interest.

Though the Gaussian distribution for $p(z)$ is mathematically elegant and computationally expedient, there are many datasets for which it is ill-suited. Spherical distributions are known to be superior for modeling directional data \cite{fisher1993statistical,mardia2014statistics}, which can be found in fields as diverse as bioinformatics \cite{bioinformatics}, geology \cite{geology}, materials science \cite{materialscience}, natural image processing \cite{writtendigit}, and many preprocessed datasets\footnote{Any dataset where the data points have been normalized to be unit length becomes a subset of a hypersphere.}.  For data supported on more complex manifolds, the literature is sparse, even though it is well-known that data often lie on such manifolds \cite{manifoldhypothesis1,manifoldhypothesis2}. In general, any manifold-supported distribution that is not globally homeomorphic to Euclidean space will not satisfy conditions (1-3) above.

Previous research on alleviating \textit{prior mismatch} exists in various forms, and has focused on increasing the family of tractable latent distributions for generative models. \cite{sphere1,sphere2} consider VAEs with the von-Mises Fisher distribution, a geometrically hyperspherical prior, and \cite{vampprior} consider mixtures of priors. \cite{dvae} propose a method that, like our method, also samples from the prior via a diffusion process over a manifold. However, their method requires very explicit knowledge of the manifold (including its projection map, scalar curvature constant, and volume), and give up an exact estimation of the KL divergence. \cite{razavi2019preventing} avoids mode collapse by lower bounding the KL-divergence term away from zero to avoid overfitting. Similarly, \cite{mao2019mode} focuses on avoiding mode collapse by using class-conditional generative models, however it requires label supervision and does not provide any guarantees that the latent space generated is homeomorphic to the data space. Finally, \cite{bvae} propose the re-scaling of various terms in the ELBO to augment the latent space --- often to surprisingly great effect on latent feature discovery --- but are restricted to the case where the latent features are independent.

While these methods expand the repertoire of feasible priors, they all require explicit user knowledge of the data topology. On the other hand, our method allows the user to be agnostic to this choice of topology; they only need to specify an affinity kernel $k$ for local pairwise similarities. We achieve this by employing ideas from both \textit{diffusion maps} and variational inference, resulting in a fully data-driven approach to latent distribution selection in deep generative models.

%\section{Measure}
%Here we propose the \textit{locally bi-Lipschitz measure}.

%Demonstrate the bi-Lipschitz constants empirically on data, vs normal VAE- To do this, take nearest neighbors of a point in manifold distance, and compute the upper and lower lipschitz constants

\section{Method}
\label{sec:method}
In this section we propose the variational diffusion autoencoder (VDAE), a class of generative models built from ideas in \textit{variational inference} and \textit{diffusion maps}. Given the data manifold $\mathcal{M}_X$, observations $X \subset \mathcal{M}_X$, and a kernel $k$, VDAEs model the geometry of $X$ by approximating a random walk over the latent diffusion manifold $\mathcal{M}_Z := \psi(\mathcal{M}_X)$. The model is trained by maximizing the \textit{local evidence}: the evidence (i.e. log-likelihood) of each point given its random walk neighborhood. Points are generated from the trained model by sampling from $\pi$, the stationary distribution of the resulting random walk.

Starting from some point $x \in X$, we can think of one step of the walk as the composition of three functions: \textbf{1}) the approximate diffusion map $\widetilde{\psi}_\omega: \mathcal{M}_X \rightarrow \mathcal{M}_Z$ parameterized by $\omega$, \textbf{2}) the stochastic function that samples from the diffusion random walk $z' \sim q_\phi(z'|x) = \mathcal{N}(\widetilde{\psi}_\omega(x), \widetilde{\mathbf{C}}_\phi(x))$ on $\mathcal{M}_Z$, and \textbf{3}) the approximate inverse diffusion map $\widetilde{\psi}^{-1}_\theta: \mathcal{M}_Z \rightarrow \mathcal{M}_X$ that generates $x' \sim p(x'|z') = \mathcal{N}(\widetilde{\psi}^{-1}_\theta(z'), cI)$ where $c$ is a fixed, user-defined hyperparameter usually set to 1.

Note that Euclidean distances in $\mathcal{M}_Z$ approximate single-step random walk distances on $\mathcal{M}_X$ due to properties of the diffusion map embedding (see \hyperref[sec:background]{Section 2} and \cite{DM:lafon}). These properties are inherited by our method via the SpectralNet algorithm, since $\widetilde{\psi}_\omega|_{\mathcal{M}_X}:\mathcal{M}_X \rightarrow \mathcal{M}_Z$ satisfies the \textit{locally bi-Lipschitz property}. This bi-Lipschitz property also reduces the need for regularization, and leads to guarantees of the ability of the VDAE to avoid posterior and mode collapse (see Section \ref{theory}).

In short, to model a diffusion random walk over $\mathcal{M}_Z$, we must learn the functions $\widetilde{\psi}_\omega, \widetilde{\psi}^{-1}_\theta$, and $ \widetilde{\mathbf{C}}_\phi$ that approximate the diffusion map, the inverse diffusion map, and the covariance of the random walk on $\mathcal{M}_Z$, at all points $z \in \mathcal{M}_Z$. SpectralNet gives us $\widetilde{\psi}_\omega$. To learn $\widetilde{\psi}^{-1}_\theta$ and $\widetilde{\mathbf{C}}_\phi$, we use variational inference.

\subsection{The lower bound}
Formally, let us define $U_x := B_d(x, \delta) \cap \mathcal{M}_X$, where $B_d(x, \delta)$ is the $\delta$-ball around x with respect to $d(\cdot, \cdot)$, the diffusion distance on $\mathcal{M}_Z$. For each $x \in X$ we define the \textit{local evidence} of $x$ as
\begin{equation}
    \mathbb{E}_{x' \sim p(x'|x)|_{U_x}} \log p_\theta(x'|x),
\end{equation}
where $p(x'|x)|_{U_x}$ restricts $p(x'|x)$ to $U_x$. This gives the \textit{local evidence lower bound}
\begin{align*}
    &\log p_\theta(x'|x) 
    \geq
    \underbrace{- D_{KL}(q_\phi(z'|x) || p_\theta(z'|x))}_\text{divergence from true diffusion probabilities} + \underbrace{\mathbb{E}_{z' \sim q_\phi(z'|x)}\log p_\theta(x'|z')}_\text{neighborhood reconstruction error}, \label{eq:manifold_elbo} \numberthis
\end{align*}
which produces the empirical loss function $\Tilde{\mathcal{L}}_{\text{VDAE}} = -D_{KL}(q_\phi(z'|x)||p_\theta(z'|x)) + \log p_\theta(x'|z_i')$,
where $z_i' = g_{\phi, \Theta}(x, \epsilon_i)$, $\epsilon_i \sim \mathcal{N}(0, I)$. The function $g_{\phi,\Theta}$ is deterministic and differentiable, depending on $\widetilde{\psi}_\omega$ and $\widetilde{\mathbf{C}}_\phi$, that generates $q_{\phi}$ by the reparameterization trick\footnote{Though $q$ depends on $\phi$ and $\omega$, we will use $q_\phi := q_{\phi,\omega}$ to be consistent with existing VAE notation and to indicate that $\omega$ is not learned by variational inference.}.
% $x'$ is drawn from the random walk neighborhood of $x$, i.e., $x' \sim p_\theta(x'|x) \approx \widetilde{\psi}^{-1}_\theta(q_\phi(z'|x))]$ which we approximate empirically by
% \begin{equation}
%     x' \approx \argmin_{y \in A} |\widetilde{\psi}_\omega(y) - z'|_d^2 \;, \; \; \; \; z' \sim q_\phi(z'|x), \label{eq:empirical_neighbor}
% \end{equation}
% where $A \subseteq X$ is the training batch.

% \begin{algorithm}
% \label{alg:training}
% \caption{VDAE training}
% \begin{algorithmic}%[1]
% \STATE $\omega, \phi, \theta \gets \text{Initialize parameters}$
% \STATE Obtain parameters $\omega$ for the approximate diffusion map $\widetilde{\psi}_\omega$ via SpectralNet \cite{shaham2018spectralnet}
% \WHILE{not converged}
% \STATE $A \gets$ Random batch from $X$
% \FOR{$x \in A$}
% \STATE $z \gets p_\phi(z'|\widetilde{\psi}_\omega(x))$ \COMMENT{Random walk step}
% \STATE $x' \gets \argmin_{y \in A \setminus \{x\}} |\widetilde{\psi}_\omega(y) - z'|_d^2$ \COMMENT{Find batch neighbors}
% \STATE $g \gets g + \frac{1}{|A|}\nabla_{\phi, \theta} \log p_\theta(x'|x)$ \COMMENT{Compute Eq. \eqref{eq:empirical_manifold_elbo}}
% \ENDFOR
% \STATE Update $\phi, \theta$ using $g$
% \ENDWHILE
% \end{algorithmic}
% \end{algorithm}

\subsection{The sampling procedure}
\label{sec:sampling}
% Recall that the diffusion process has a stationary distribution that is uniform on the manifold. By modeling the diffusion process in such a manner, we have access to not only $p(x'|x)$, the random walk on $\mathcal{M}_X$, but also $p(z'|z)$, the random walk on $\mathcal{M}_Z$. Therefore, given any point $z \in \mathcal{M}_Z$ we are able to sample \textit{uniformly} from the latent manifold by drawing repeatedly from $p(z'|z)$. This gives 

% Note that, by modeling the diffusion process, 
% allowing the prior to be the uniform $X$ comes at a cost: sampling from an arbitrary manifold prior is no longer a single-step procedure as it is with simpler priors. But we claim that the cost is not too high, since we have access to $\psi$. Recall that, by the diffusion map framework, $k$ defined on $\mathcal{M}_Z$ generates a random walk on $\mathcal{M}_X$.

Composing $q_\phi(z'|x)$($\approx p_\theta(z'|x)$) with $p_\theta(x'|z')$ gives us an approximation of $p_\theta(x'|x)$. Then the simple, parallelizable, and fast random walk based sampling procedure naturally arises: initialize with an arbitrary point on the manifold $x_0 \in \mathcal{M}_X$ (e.g. from the dataset $X$), pick suitably large $N$, and for $n=1,\dots, N$ draw $x_n \sim p(x|x_{n-1})$.  See Section \ref{randomwalk} for examples of points drawn from this procedure.

\subsection{A practical implementation}
We now introduce a practical implementation, considering the case where $\widetilde{\psi}_\omega(x)$, $q_\phi(z'|x)$ and $p_\theta(x'|z')$ are neural network functions.

The \textbf{neighborhood reconstruction error} $\mathbb{E}_{z' \sim q_\phi(z'|x)}\log p_\theta(x'|z')$ should be differentiated from the \textit{self} reconstruction error in VAEs, i.e. reconstructing $x'$ vs $x$. Since $q_\phi(z'|x)$ models the neighborhood of $\widetilde{\psi}_\omega(x)$, we may sample $q_\phi$ to obtain $z'$ (the neighbor of $x$ in the latent space). Assuming $\psi^{-1}$ exists, we have $x' \sim p_\theta(x'|x) (\approx \widetilde{\psi}^{-1}_\theta(q_\phi(z'|x)))$. To make this practical, we can approximate $x'$ by finding the closest data point to $x'$ in random walk distance (due to the aforementioned advantages of the latent space). %This gives \eqref{eq:empirical_neighbor}.
In other words, we approximate empirically by
\begin{equation}
    x' \approx \argmin_{y \in A} |\widetilde{\psi}_\omega(y) - z'|_d^2 \;, \; \; \; \; z' \sim q_\phi(z'|x), \label{eq:empirical_neighbor}
\end{equation}
where $A \subseteq X$ is the training batch.

On the other hand, the \textbf{divergence of random walk distributions},\\ $- D_{KL}(q_\phi(z'|x) || p_\theta(z'|x))$, can be modeled simply as the divergence of two Gaussian kernels defined on $\mathcal{M}_Z$. Though $p_\theta(z'|x)$ is intractable, the diffusion map $\psi$ gives us the diffusion embedding $Z$, which is an approximation of the true distribution of $p_\theta(z'|x)$ in a neighborhood around $z = \psi(x)$. We estimate the first and second moments of this distribution in $\mathbb{R}^D$ by computing the local Mahalanobis distance of points in the neighborhood. Then, by minimizing the KL divergence between $q_\phi(z'|x)$ and the one implied by this Mahalanobis distance, we obtain the loss:
\begin{align*}
    & - D_{KL}(q_\phi(z'|x) || p_\theta(z'|x))
    = -\log \frac{|\alpha \Sigma_*|}{|\widetilde{\mathbf{C}}_\phi|} + d - tr\{(\alpha \Sigma_*)^{-1}\widetilde{\mathbf{C}}_\phi\}, \numberthis{} \label{eq:practical_loss}
\end{align*}
where $\widetilde{\mathbf{C}}_\phi(x)$ is a neural network function, $\Sigma_*(x) = \text{Cov}(B_{d}(\psi(x), \delta) \cap Z)$ is the covariance of the points in a neighborhood of $z = \psi(x) \in Z$, and $\alpha$ is a scaling parameter controlling the random walk step size. Note that the covariance $\widetilde{\mathbf{C}}_\phi(x)$ does not have to be diagonal, and in fact is most likely not. 
% Empirically, we observe that these neighborhoods need not be very small to obtain good approximations (see \ref{fig:alpha}).
Combining Eqs. \ref{eq:empirical_neighbor} and \ref{eq:practical_loss} we obtain \hyperref[alg:training]{Algorithm 1}.

Since we use neural networks to approximate the random walk induced by the composition of $q_\phi(z'|x)$ and $p_\theta(x'|z')$, the generation procedure is highly parallelizable. This leads naturally to a sampling procedure for this random walk (\hyperref[alg:sampling]{Algorithm 2}). We observe that the random walk enjoys rapid mixing properties --- it only takes several iterations of the random walk to sample from all of $\mathcal{M}_Z$ \footnote{For all experiments in Section \ref{experiments}, the number of steps required to draw from $\pi$ is less than 10.}.

Finally, we describe a practical method for computing the local bi-Lipschitz property. (In Section \ref{sec:empirical_bilip} we then perform comparisons with this method.) Let $Z$ and $X$ be the latent and generated data distributions of our model $f$ (i.e. $f : Z \rightarrow X$). We define, for each $z \in Z$ and $k \in \mathbb{N}$, the function $\mathtt{bilip}_k(z)$:
\begin{align*}
    \mathtt{bilip}_k(z) = \min \{K:
    \frac{1}{K} \leq \frac{d_x(f(z), f(z'))}{d_Z(z, z')} \leq K\},
\end{align*}
for all $z' \in U_{z,k} \cap Z$, where $d_X$ and $d_Z$ are metrics on $X$ and $Z$, and $U_{z,k}$ is the $k$-nearest neighborhood of $z$. Intuitively, increasing values of $K$ characterize an increasing tendency to \textit{stretch} or \textit{compress} regions of space. By analyzing statistics of the local bi-Lipschitz measure at all points in a latent space $Z$, we gain insight into how well-behaved a mapping $f$ is.

\begin{algorithm*}
\begin{algorithmic}%[1]
%\subcaptionbox{first}
\STATE $\omega, \phi, \theta \gets \text{Initialize parameters}$
\STATE Obtain parameters $\omega$ for the approximate diffusion map $\widetilde{\psi}_\omega$ via SpectralNet \cite{shaham2018spectralnet}
\WHILE{not converged}
\STATE $A \gets$ Random batch from $X$
\FOR{$x \in A$}
\STATE $z \gets p_\phi(z'|\widetilde{\psi}_\omega(x))$ \COMMENT{Random walk step}
\STATE $x' \gets \argmin_{y \in A \setminus \{x\}} |\widetilde{\psi}_\omega(y) - z'|_d^2$ \COMMENT{Find batch neighbors}
\STATE $g \gets g + \frac{1}{|A|}\nabla_{\phi, \theta} \log p_\theta(x'|x)$ \COMMENT{Compute Eq. \eqref{eq:manifold_elbo}}
\ENDFOR
\STATE Update $\phi, \theta$ using $g$
\ENDWHILE
\end{algorithmic}
\caption{VDAE training}
\label{alg:training}
\end{algorithm*}

\subsection{Comparison to variational inference (VI)}
Traditional VI involves maximizing the joint log-evidence of each data point $x_i$ in a given dataset via the ELBO (see \ref{sec:background}). Our method differs in both the training and evaluation steps.

In training, our setup is the same as above, except our likelihood is a conditional likelihood $p(x'|x)|_{U_x}$, where $x'$ is in the diffusion neighborhood of $x$. Thus we maximize the local log-evidence of each data point $\mathbb{E}_{x' \sim p(x'|x_i)} \log p_\theta(x'|x_i)$,
which can be lower bounded by Eq. \eqref{eq:manifold_elbo}. Thus our prior is $p(z'|x)$ and our posterior is $p(z'|x',x) = p(x',z'|x)/p(x'|x)$, and we train an approximate posterior $q_\phi(z'|x)$ and a recognition model $p_\theta(x'|z')$.

In evaluation, we draw from the stationary distribution $p(z')$ of the diffusion random walk on the latent manifold $\mathcal{M}_z = \psi(\mathcal{M}_x)$. We then leverage the latent variable structure of our model to draw a sample $x = p_\theta(x'|z')p(z')$, where $p_\theta(x'|x_i)$ is the recognition model.

\section{Theory} \label{theory}

In this section, we show that the desired diffusion and inverse diffusion maps $\psi: \mathcal{M}_X \rightarrow \mathcal{M}_Z$ and $\psi^{-1}:\mathcal{M}_Z \rightarrow \mathcal{M}_X$ can be approximated by neural networks, where the network complexity is bounded by quantities related to the intrinsic geometry of the manifold.

\begin{algorithm*}[tb!]

%\label{alg:sampling}
%\begin{multicols}{2}

\begin{algorithmic}
%\title{a}
\caption{VDAE sampling}
 \STATE $X_0 \gets $ Initialize with points $X_0 \subset X$; $t \gets 0$
 %\STATE 
 \WHILE{$p(X_0) \not \approx \pi$}
 \FOR{$x_t \in X$}
 \STATE $z_{t+1} \sim p_\phi(z'|\widetilde{\psi}_\omega(x_t))$ \COMMENT{Random walk step}
 \STATE $x_{t+1} \sim p_\theta(x|z_{t+1})$ 
 \COMMENT{Map back to input space}
 \ENDFOR
%  \State $X_{t+1} \gets X_{\text{subsample}} \subset X_{t+1}'$
 \STATE $t \gets t + 1$
\ENDWHILE
\end{algorithmic}
\label{alg:sampling}

\end{algorithm*}
The capacity of the encoder $\widetilde{\psi}$ has already been considered in \cite{shaham2018provable} and \cite{mishne2017diffusion}. Thus we focus on the capacity of the decoder $\widetilde{\psi}^{-1}$.
The following theorem is proved in Appendix \ref{proofs}, based on the result in \cite{jones2008manifold}.
\begin{thm}\label{thm:main}
Let $\mathcal{M}_X \subset \mathbb{R}^m$ be a smooth $d$-dimensional manifold, $\psi(\mathcal{M}_X) \subset \mathbb{R}^{D}$ be the diffusion map for $D\ge d$ large enough to have a subset of coordinates that are locally bi-Lipschitz.  Let $\mathbf{X} = \begin{bmatrix} X_1, & ..., & X_m
\end{bmatrix}$ be the set of all $m$ extrinsic coordinates of the manifold.  Then there exists a sparsely-connected ReLU network $f_N$, with $4DC_{\mathcal{M}_X}$ nodes in the first layer, $8dmN$ nodes in the second layer, and $2mN$ nodes in the third layer, and $m$ nodes in the output layer, such that
    \begin{equation}
        \left\|\mathbf{X}(\psi(x)) - {f_N}(\psi(x)) \right\|_{L^2(\psi(\mathcal{M}_X))} \leq \sqrt{m}C_\psi / \sqrt{N},
    \end{equation}
    where the norm is interpreted as
    $\|F\|^2_{L^2(\psi(\mathcal{M}))} := \int \|F(\psi(x))\|_2^2 d\psi(x)$. Here $C_\psi$ depends on how sparsely $X(\psi(x)) \big|_{U_i}$ can be represented in terms of the ReLU wavelet frame on each neighborhood $U_i$, and $C_{\mathcal{M}_X}$ on the curvature and dimension of the manifold $\mathcal{M}_X$.
\end{thm}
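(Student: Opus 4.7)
The plan is to combine three ingredients: the local bi-Lipschitz parameterization of $\mathcal{M}_X$ by subsets of diffusion coordinates from \cite{jones2008manifold}, a ReLU partition of unity subordinate to a finite atlas of such charts, and a local ReLU-wavelet approximation of each extrinsic coordinate on each chart. The network's four layers correspond exactly to these ingredients: the first layer builds the chart selectors, the middle two layers build the wavelet frame approximation, and the output layer aggregates the $m$ extrinsic coordinates.

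First I would cover $\psi(\mathcal{M}_X) \subset \mathbb{R}^D$ by a finite collection of neighborhoods $\{U_i\}_{i=1}^{C_{\mathcal{M}_X}}$, where on each $U_i$ some $d$-subset of the $D$ diffusion coordinates forms a bi-Lipschitz chart onto a box in $\mathbb{R}^d$; the number of charts $C_{\mathcal{M}_X}$ is the constant in the statement and depends only on the manifold's curvature and dimension via a standard volume/injectivity-radius argument. On each $U_i$ I would then pull back the extrinsic coordinate $X_k \circ \psi^{-1}$ to a function on the box, which is Lipschitz by composition and in particular lies in the Besov-type smoothness class handled by ReLU wavelet frames.

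Next I would construct ReLU bump functions $\chi_i$ supported near $U_i$. Each one-dimensional bump in $\mathbb{R}^D$ can be written as a min/max of four ReLU thresholds, so one bump on a $D$-dimensional rectangle costs $4D$ ReLU units, giving the claimed $4D C_{\mathcal{M}_X}$ first-layer nodes. On each chart I would then invoke a ReLU-wavelet approximation result in the spirit of \cite{shaham2018provable}: an $N$-term wavelet frame expansion of a function from $\mathbb{R}^d$ to $\mathbb{R}$ can be implemented with $8dN$ second-layer units (each wavelet requiring roughly $O(d)$ threshold units with a small constant) and $2N$ third-layer units to reconstruct the value. Running these in parallel for all $m$ output coordinates yields the stated $8dmN$ and $2mN$ layer widths, with the chart selectors $\chi_i$ multiplicatively gating the appropriate local expansion. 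The $L^2$ error on each chart is $O(C_\psi/\sqrt{N})$ by the standard wavelet nonlinear approximation rate, where $C_\psi$ quantifies the sparsity of the local expansion of $X(\psi^{-1}(\cdot))$ in the ReLU wavelet frame; summing in quadrature over the $m$ output coordinates produces the $\sqrt{m}$ factor.

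The main obstacle I expect is the ReLU realization of the partition-of-unity mechanism while keeping the network sparsely connected and of the exact stated width. Honest implementation of a smooth partition of unity requires products of bumps, which are not directly computable by a single ReLU layer; the cleanest workaround is to use the bi-Lipschitz chart overlaps to argue that a non-smooth ReLU "hard" partition (via the min/max of thresholds) suffices, at the cost of an extra boundary-layer term that is absorbed into $C_\psi$ through the bi-Lipschitz constants of \cite{jones2008manifold}. Controlling this boundary contribution, and verifying that the sparse wiring between the chart-selector layer and the chart-specific wavelet units yields the claimed node counts rather than a dense $O(DmN)$ network, is where the detailed bookkeeping lives; the rest is assembly of known approximation-theoretic estimates.
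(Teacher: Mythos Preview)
Your proposal is correct and follows essentially the same route as the paper: reduce to a single extrinsic coordinate (whence the $\sqrt{m}$), cover $\mathcal{M}_X$ by $C_{\mathcal{M}_X}$ neighborhoods on which a $d$-subset of diffusion coordinates is bi-Lipschitz via \cite{jones2008manifold}, use a first layer of $4D$ ReLU units per chart to localize/select those coordinates, and then invoke the ReLU-wavelet approximation rate of \cite{shaham2018provable} after observing that $X\circ\psi^{-1}$ is Lipschitz on each chart (hence has $\ell^1$ wavelet coefficients). The paper handles your partition-of-unity concern simply by deferring to the construction in \cite{shaham2018provable} rather than working it out anew, so your worry there is legitimate but not something the paper resolves independently.
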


Thm \ref{thm:main} guarantees the existence and size of a decoder network for learning a manifold. Together with the main theorem in \cite{shaham2018provable}, we obtain guarantees for both the encoder and decoder on manifold-valued data. The proof is built on two properties of ReLU neural networks: 1) their ability to split curved domains into small, almost Euclidean patches, 2) their ability to build differences of bump functions on each patch, which allows one to borrow approximation results from the theory of wavelets on spaces of homogeneous type.  The proof also crucially uses the bi-Lipschitz property of the diffusion embedding \cite{jones2008manifold}. The key insight of Thm \ref{thm:main} is that, because of the bi-Lipschitz property, the coordinates of the manifold in the ambient space $\mathbb{R}^m$ can be thought of as functions of the diffusion coordinates.  We show that because each coordinate function $X_i$ is Lipschitz, the ReLU wavelet coefficients of $X_i$ are necessarily $\ell^1$.  This allows us to use the existing guarantees of \cite{shaham2018provable} to complete the desired bound.

We also discuss the connections between the distribution at each point in diffusion map space, $q_\phi(z|x)$, and the result of this distribution after being decoded through the decoder network $f_N(z)$ for $z\sim q_\phi(z|X)$.  Similar to  \cite{singer2008non}, we characterize the covariance matrix $Cov(f_N(z)) := \mathbb{E}_{z\in q_\phi(z|x)}[f_N(z) f_N(z)^T]$.
The following theorem is proved in Appendix \ref{proofs}.

\begin{figure}[tb!]
    \centering
    \subfigure{
     \includegraphics[width=2.5cm]{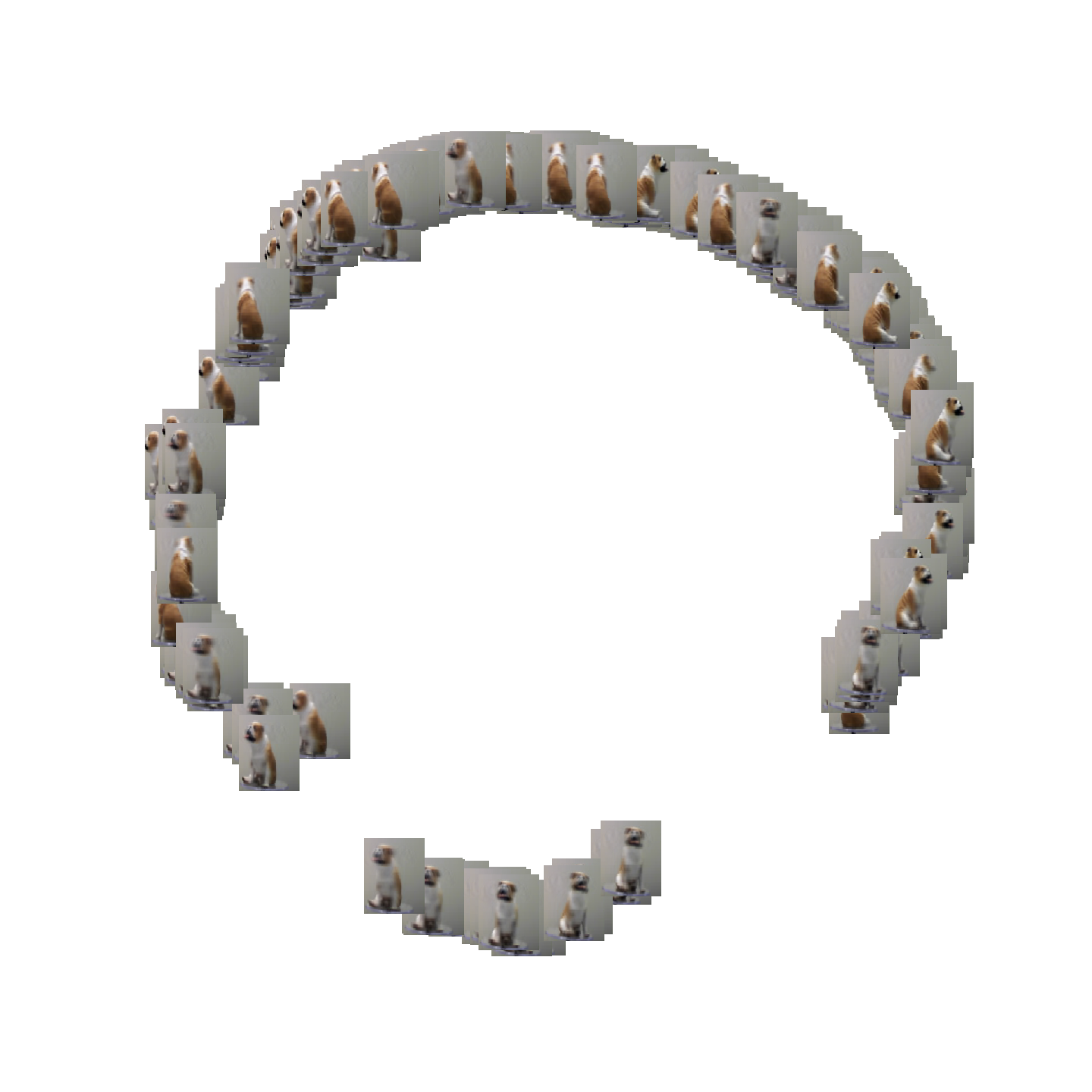}
     }
     \subfigure{
     \includegraphics[width=2.5cm]{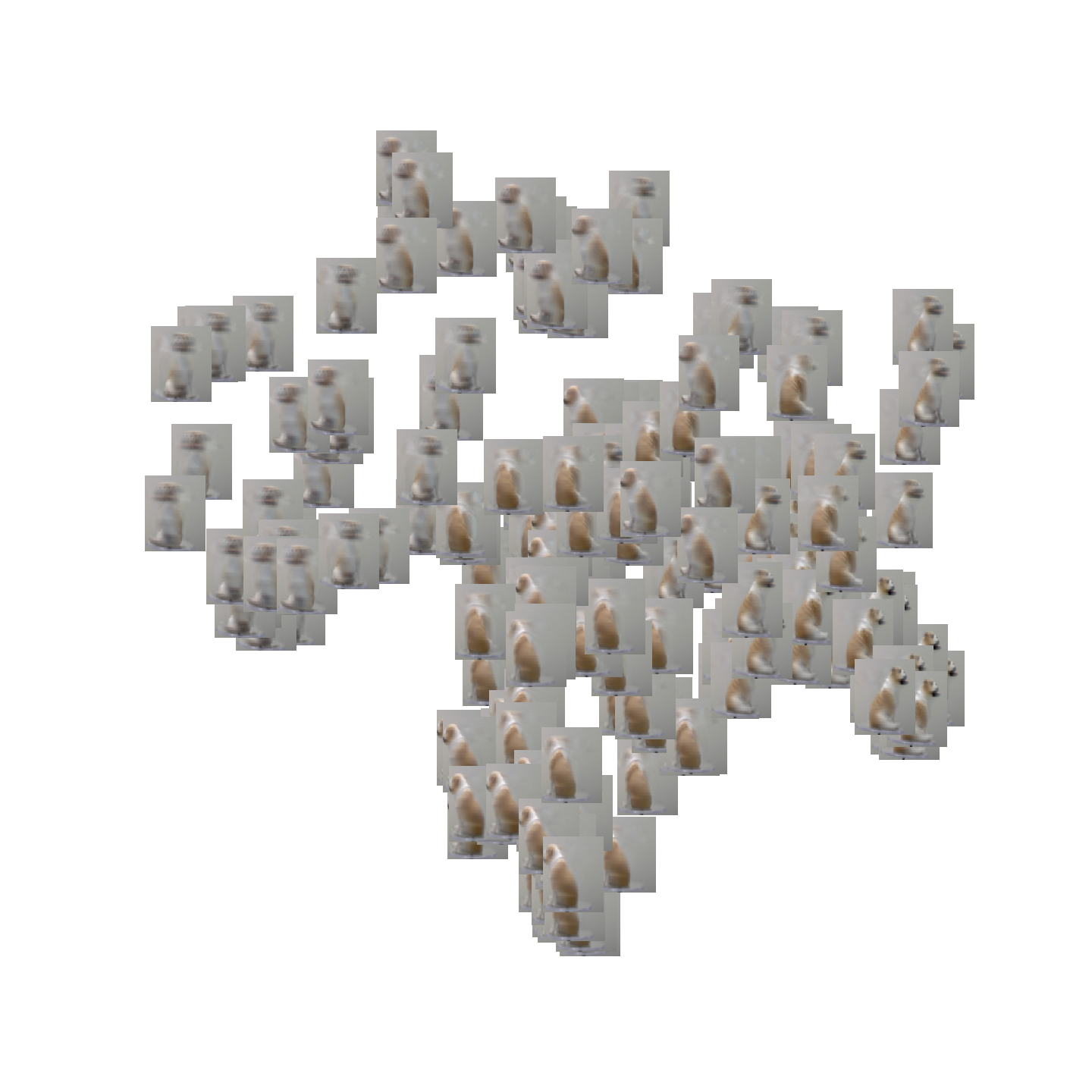}
     }
     \subfigure{
     \includegraphics[width=2.5cm]{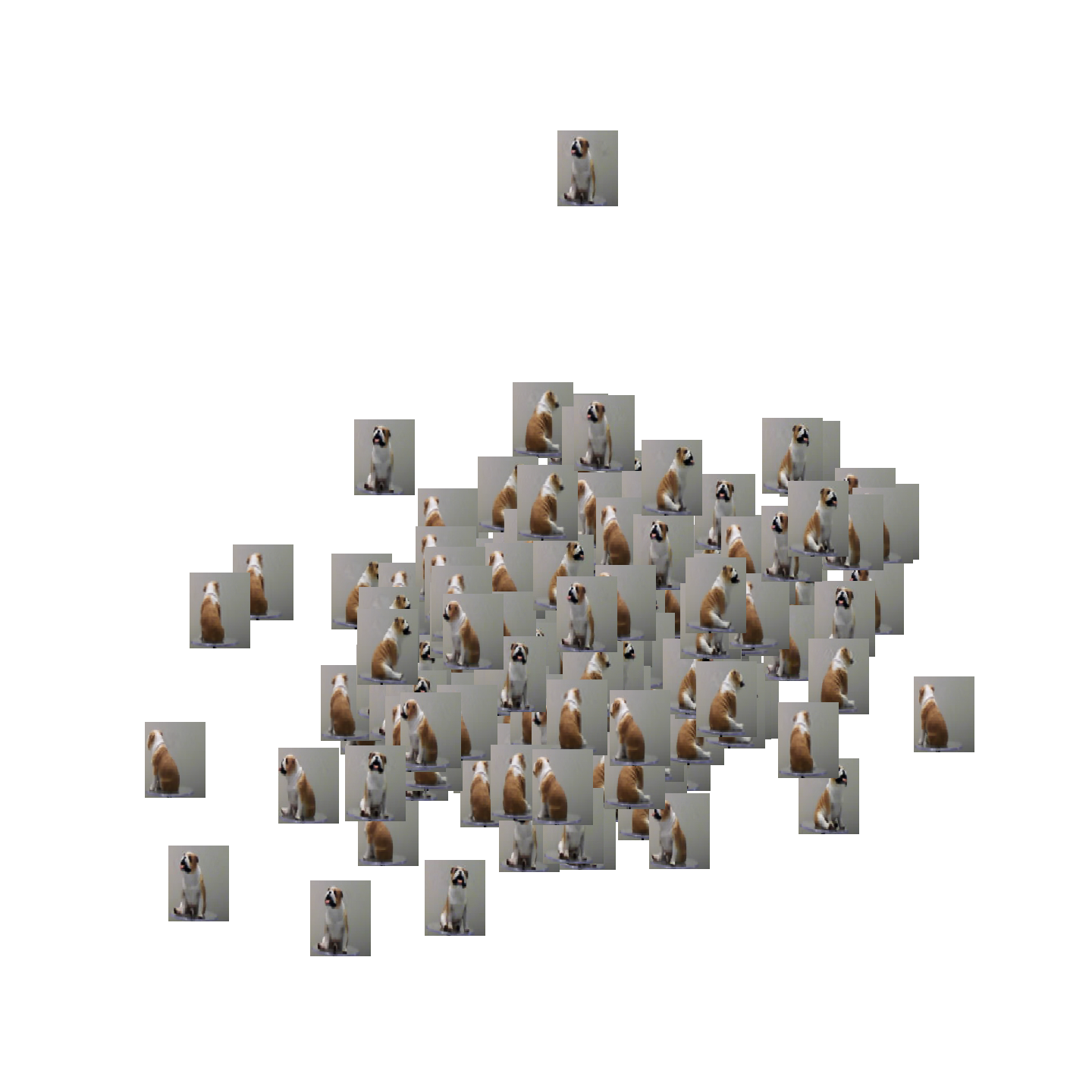}
     }
     \subfigure{
     \includegraphics[width=2.5cm]{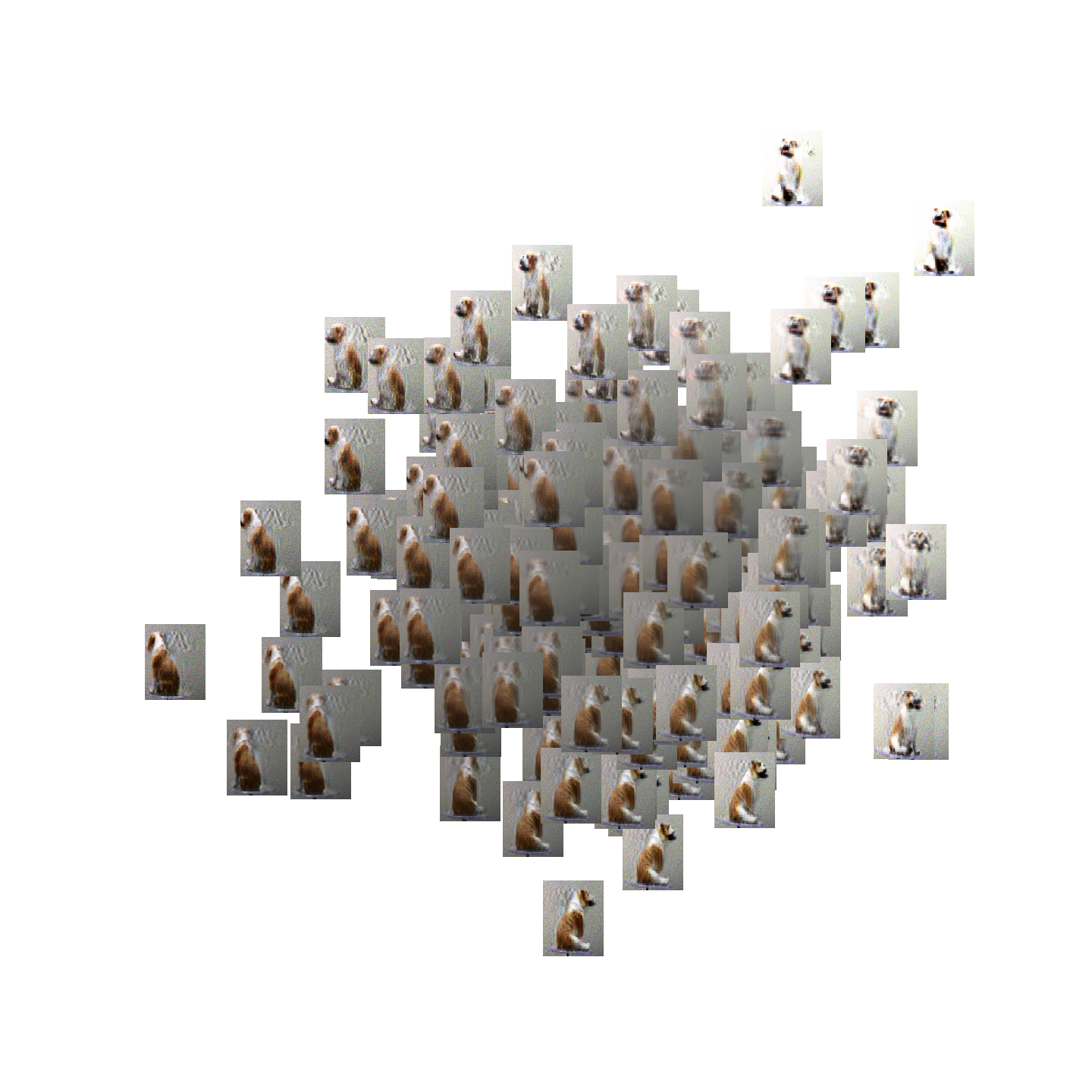}
     }
\caption{We consider the rotating bulldog example. Images are drawn from the latent distribution and plotted in terms of the 2D latent space of each model. From left to right: VDAE, SVAE, $\beta$-VAE, WGAN.}
\label{fig:rotatingbulldog}
\end{figure}

\begin{thm}
Let $f_N$ be a neural network approximation to $\mathbf{X}$ as in Thm \ref{thm:main}, such that it approximates the extrinsic manifold coordinates.
Let $C\in \mathbb{R}^{m\times m}$ be the covariance matrix $C=\mathbb{E}_{z\in q_\phi(z|x)}[f_N(z) f_N(z)^T]$.  Let $q_\phi(z|x) \sim N(\psi(x),\Sigma)$ with small enough $\Sigma$ that there exists a patch $U_{z_0}\subset \mathcal{M}$ around $z_0$ satisfying the bi-Lipschitz property of \cite{jones2008manifold}, and such that
$Pr(z\sim q_\phi(z|x) \not\in \psi(U_{z_0}))<\epsilon$.  Then the number of eigenvalues of $C$ greater than $\epsilon$ is at most $d$, and $C = J_{z_0}\Sigma J_{z_0}^T + O(\epsilon)$ where $J_{z_0}$ is the $m\times D$ Jacobian matrix at $z_0$.
\label{thm:burst}
\end{thm}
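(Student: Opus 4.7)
The plan is to follow the Taylor expansion / covariance burst argument of \cite{singer2008non}, localized to the bi-Lipschitz patch $U_{z_0}$ from \cite{jones2008manifold} and with the tail outside that patch absorbed by the hypothesis $\Pr(z\notin\psi(U_{z_0}))<\epsilon$. First I would split
\begin{equation*}
C \;=\; \mathbb{E}_{z\sim q_\phi}\!\left[f_N(z)f_N(z)^T\,\mathbf{1}_{z\in\psi(U_{z_0})}\right] \;+\; \mathbb{E}_{z\sim q_\phi}\!\left[f_N(z)f_N(z)^T\,\mathbf{1}_{z\notin\psi(U_{z_0})}\right].
\end{equation*}
The second term is $O(\epsilon)$ in operator norm, since $\|f_N\|_\infty$ is controlled by the diameter of $\mathcal M_X\subset\mathbb R^m$ plus the uniform approximation slack from Thm \ref{thm:main}, and the indicator event carries mass $<\epsilon$.

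On the patch, Thm \ref{thm:main} says $f_N$ is $L^2$-close to the smooth target $F:=\mathbf X\circ\psi^{-1}$, which on $\psi(U_{z_0})$ is genuinely smooth: by the bi-Lipschitz property, $d$ of the $D$ diffusion coordinates parameterize the patch, the remaining diffusion coordinates are smooth functions of these $d$, and the extrinsic coordinates $\mathbf X$ are smooth in all of them. Taylor expanding $F$ at $z_0=\psi(x)$,
\begin{equation*}
F(z) \;=\; F(z_0) \;+\; J_{z_0}(z-z_0) \;+\; R(z-z_0), \qquad \|R(h)\|\lesssim \|h\|^2,
\end{equation*}
substituting into $\mathbb E_{z\sim\mathcal N(z_0,\Sigma)}$, and using the Gaussian moments $\mathbb E[(z-z_0)(z-z_0)^T]=\Sigma$ and $\mathbb E\|z-z_0\|^4\lesssim \|\Sigma\|_F^2$, one reads off (after centering)
\begin{equation*}
C \;=\; J_{z_0}\Sigma J_{z_0}^T \;+\; O(\epsilon),
\end{equation*}
where the $O(\epsilon)$ swallows the tail contribution, the $O(\|\Sigma\|_F^2)$ Taylor remainder (smaller than $\epsilon$ by choosing $\Sigma$ small), and the $L^2$-error $\sqrt m\,C_\psi/\sqrt N$ from replacing $f_N$ by $F$.

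For the eigenvalue count: because $F|_{\psi(U_{z_0})}$ factors as a smooth map through a $d$-dimensional local chart (the $d$ bi-Lipschitz coordinates), the $m\times D$ Jacobian $J_{z_0}$ has rank at most $d$. Consequently $J_{z_0}\Sigma J_{z_0}^T$ has rank at most $d$, so by Weyl's inequality applied to $C=J_{z_0}\Sigma J_{z_0}^T+E$ with $\|E\|=O(\epsilon)$, at most $d$ eigenvalues of $C$ can exceed $\epsilon$.

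The main obstacle I expect is rigorizing the Taylor expansion: $f_N$ itself is only piecewise linear, so differentiating it at $z_0$ is delicate. The clean workaround is to carry out the expansion for the smooth $F$ and use Thm \ref{thm:main} plus a Cauchy--Schwarz bound to absorb $\|f_N-F\|_{L^2}$ into the error; this requires a mild absolute-continuity condition on $q_\phi$ with respect to the reference measure $d\psi(x)$ used in Thm \ref{thm:main}, which is automatic for a nondegenerate Gaussian restricted to the patch. A secondary subtlety is that the hypothesis $\Pr(z\notin\psi(U_{z_0}))<\epsilon$ implicitly confines the mass of $\Sigma$ to a neighborhood of the $d$-dimensional tangent space at $z_0$; this is exactly what is needed to make the rank-$\le d$ conclusion for $J_{z_0}\Sigma J_{z_0}^T$ meaningful in the $D$-dimensional ambient diffusion space, and it is the only conceptual point beyond routine second-order Gaussian calculus.
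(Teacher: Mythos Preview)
Your proposal is correct and follows essentially the same strategy as the paper: split the expectation into the patch and its complement, bound the complement by $O(\epsilon)$, invoke the Singer--Coifman burst argument on the patch to obtain $J_{z_0}\Sigma J_{z_0}^T$, and deduce the rank bound from the fact that on $U_{z_0}$ the extrinsic coordinates depend only on $d$ of the $D$ diffusion coordinates. Your treatment is in fact more careful than the paper's on one point: the paper speaks directly of ``the Jacobian of the function $f_N$'' and hedges with ``if $f_N$ fully learned $X(\psi(x))$'', whereas you correctly sidestep the non-differentiability of the ReLU network by Taylor-expanding the smooth target $F=\mathbf X\circ\psi^{-1}$ and absorbing $\|f_N-F\|_{L^2}$ into the error via Thm~\ref{thm:main}; the explicit appeal to Weyl's inequality for the eigenvalue count is likewise a clean addition.
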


Thm \ref{thm:burst} establishes the relationship between the covariance matrices used in the sampling procedure and their image under the decoder $f_N$ to approximate $\psi^{-1}$.  Similar to \cite{singer2008non}, we are able to sample according to a multivariate normal distribution in the latent space.  Thus, the resulting cloud in the data space is distorted (to first order) by the local Jacobian of the map $f_N$.  The key insight of Thm \ref{thm:burst} is from combining this idea with the observation of \cite{jones2008manifold}: that $\psi^{-1}$ depends locally only on $d$ of the coordinates in the $D$ dimensional latent space.

\section{Experimental results} \label{experiments}
In this section we explore various properties of the VDAE and compare it against several deep generative methods on a selection of real and synthetic datasets. Unless otherwise noted, all comparisons are against the Wasserstein GAN (WGAN), $\beta$-VAE, and hyperspherical VAE (SVAE). Each model is trained with the same architecture across all experiments (see Section \ref{sec:architectures}).
\subsection{Video generation with rigid-body motion}

We first consider the task of generating new frames from videos of rigid-body motion, and examine the latent spaces of videos with known topological structure to demonstrate the homeomorphic properties of the VDAE. We consider two examples, the rotating bulldog example \cite{roy2} and the COIL-20 dataset. \cite{nene1996columbia}. 

The rotating bulldog example consists of $200$ frames of a color video (each frame is $100 \times 80 \times 3$) of a spinning figurine. The rotation of the bulldog and the fixed background create a data manifold that is topologically circular, corresponding to the single degree of variation (the rotation angle parameter) in the dataset. For all methods we consider a 2 dimensional latent space. In Fig. \ref{fig:rotatingbulldog} we present 300 generated samples by displaying them on a scatter plot with coordinates corresponding to their latent dimensions $z_1$ and $z_2$. In the Appendix table \ref{tab:fid}, we evaluate the quality of the generated images using the Frechet inception distance (FID).

% \begin{figure}[t]
%     \centering
%     \subfigure{
%      \includegraphics[width=2cm]{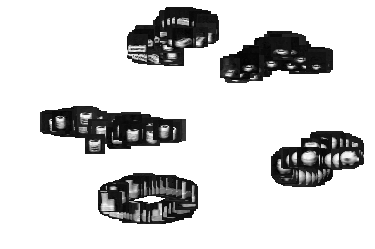}
%     }
%     \subfigure{
%     \includegraphics[width=2cm]{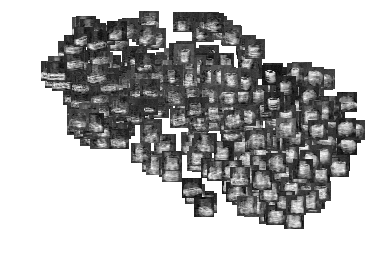}
%     }
%     \subfigure{
%     \includegraphics[width=2cm]{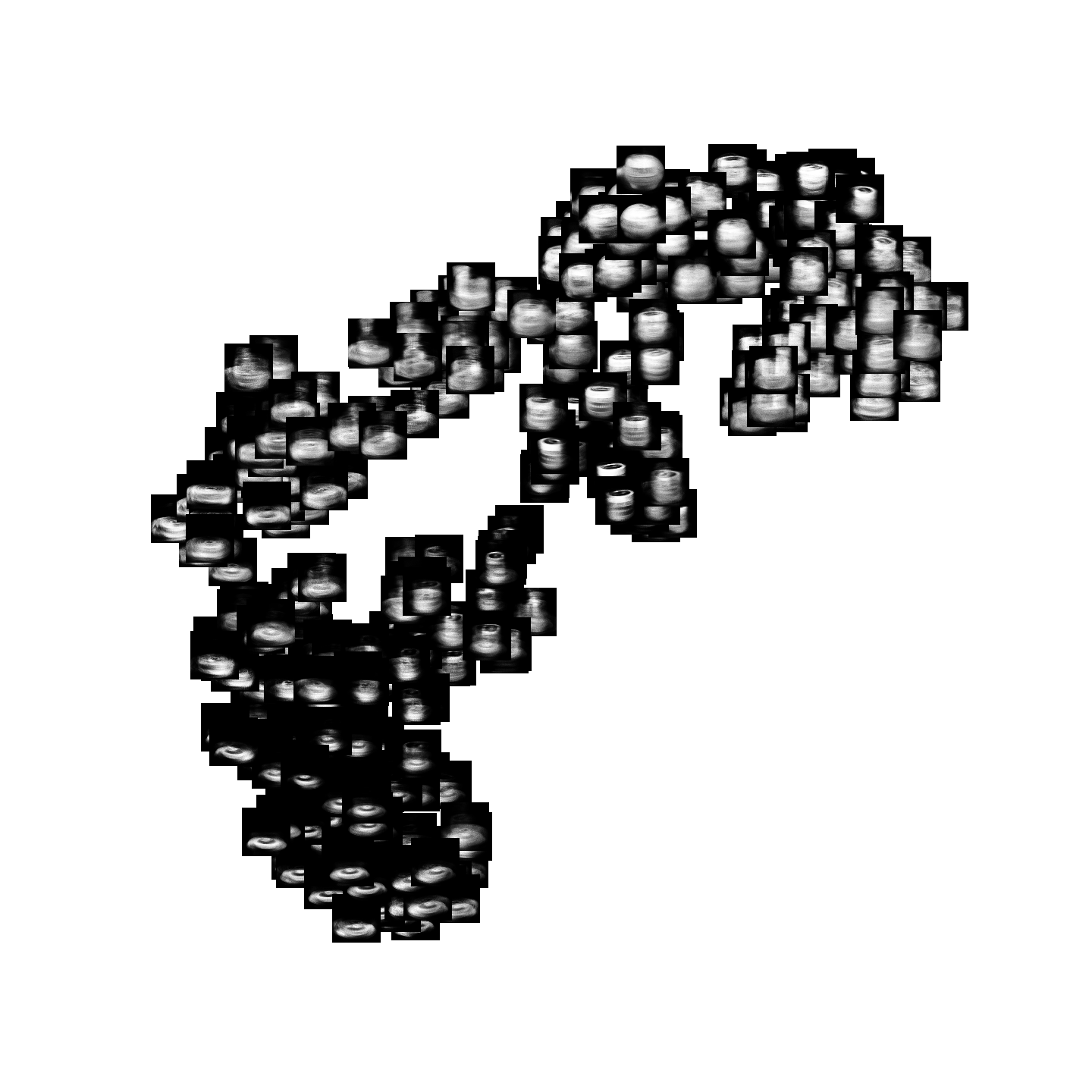}
%     }
%     \subfigure{
%     \includegraphics[width=2cm]{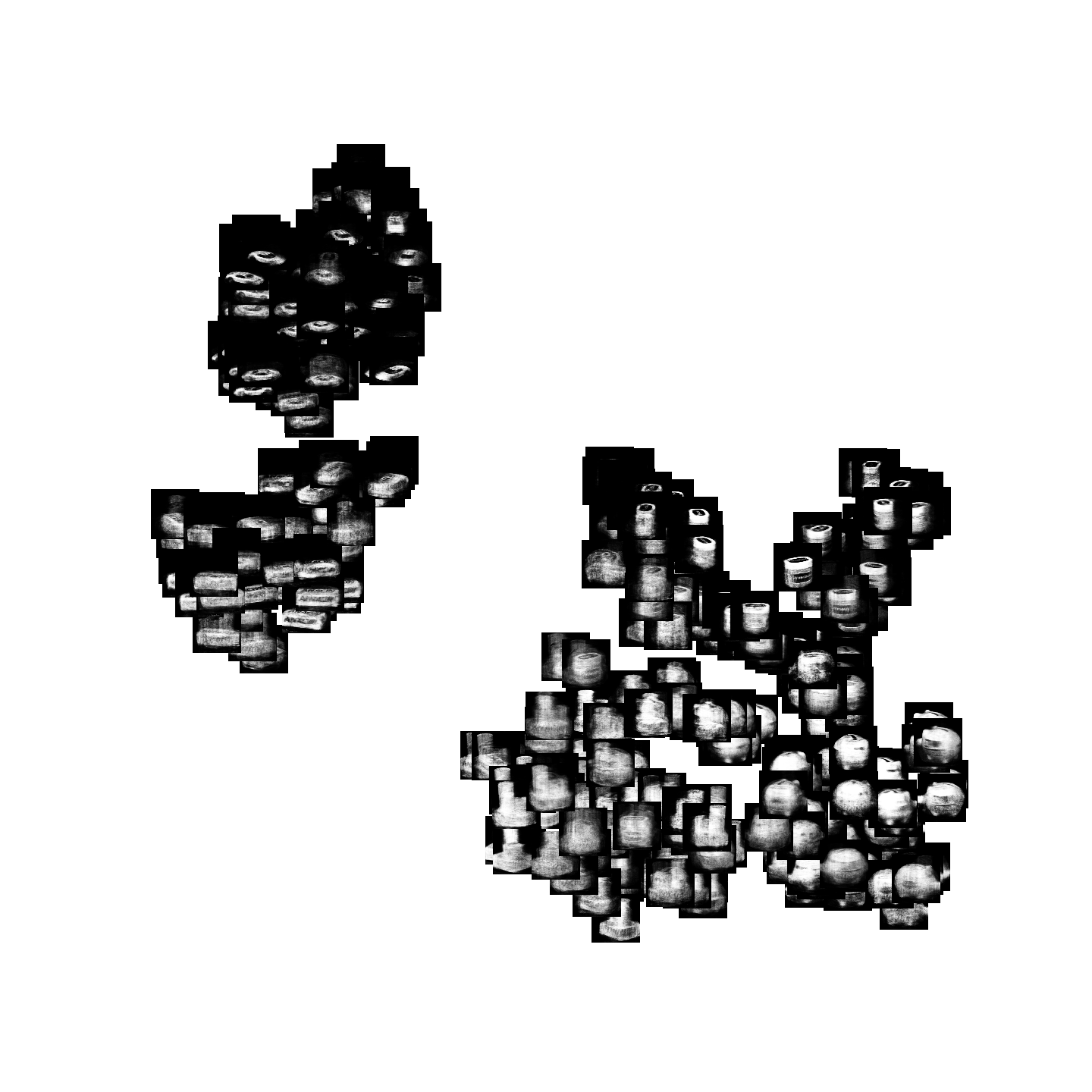}
%     }
% \caption{A tSNE embedding of $360$ generated images from COIL-20 data set. Pictured from left-right: VDAE, SVAE, $\beta$-VAE, WGAN. A larger figure is included in Appendix \ref{fig:coil20_full}.} \label{fig:coil20}
% \end{figure}

The COIL-20 data set consists of 360 images of five different rotating objects displayed against on a black background (each frame is $448\times 416\times 1$). This yields several low dimensional manifolds, one for each object, and results in a difficult data set for traditional generative models given its small size and the complex geometric structure. For all comparisons, we use $10$ dimensional latent space. The resulting images are embedded with tSNE and plotted in Fig. \ref{fig:coil20}. Note that, while other methods generate images that topologically mimic the fixed latent distribution of the model (e.g. $\mathcal{N}(0,I_d)$, $\text{Uniform}(0,1)^{d}$), our method generates images that remain true to the actual topological structure of the dataset.

\begin{figure}[tb!]
\begin{center}
\subfigure{
\includegraphics[width=0.15\textwidth]{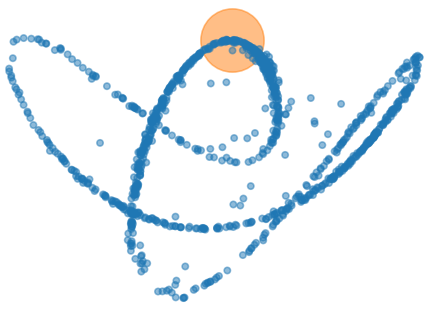}}
\subfigure{
\includegraphics[width=0.15\textwidth]{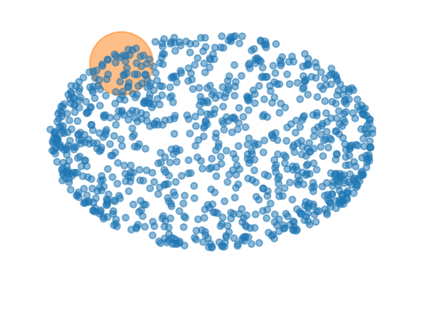}}
\subfigure{
\includegraphics[width=0.15\textwidth]{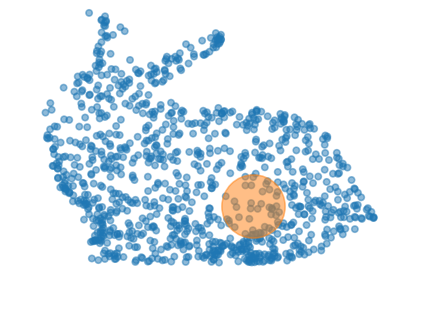}}
\subfigure{
\includegraphics[width=0.15\textwidth]{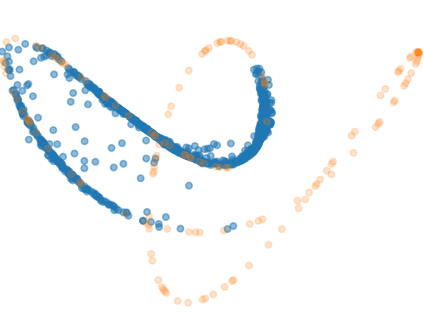}}
\subfigure{
\includegraphics[width=0.15\textwidth]{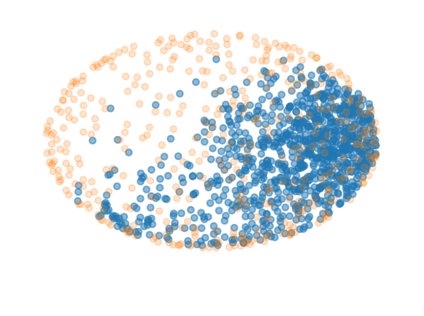}}
\subfigure{
\includegraphics[width=0.15\textwidth]{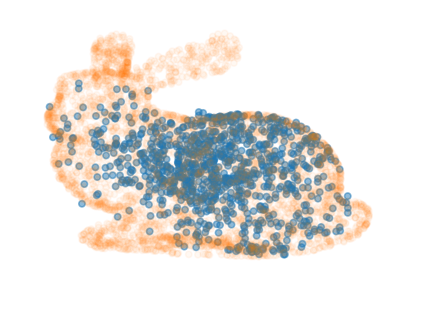}}
\end{center}
\caption{From left to right, the first three scatterplots show examples of distributions reconstructed from a random walk on $\mathcal{M}_Z$ (via \hyperref[alg:sampling]{Algorithm 2}) given a single seed point drawn from $X$. The next three are examples of a single burst drawn from $p_\theta(x|z)$.
The distributions are a loop (a, d), sphere (b, e), and the Stanford bunny (c, f).}
\label{fig:diffusion_burst}
\end{figure}

\subsection{Data generation from uniformly sampled manifolds}\label{randomwalk}
In the next experiment, we visualize the results of the sampling procedure in \hyperref[alg:training]{Algorithm 2} on three synthetic manifolds. As discussed in \ref{sec:sampling}, we randomly select an initial seed point, then recursively sample from $p_\theta(x'|x)$ to simulate a random walk on the manifold.

In fig. \ref{fig:diffusion_burst} (a-c) for three different manifolds, the location of the initial seed point is highlighted, then 20 steps of the random walk are taken, and the resulting generated points are displayed.  The generated points remain on the manifold even after this large number of resampling iterations, and the distribution of sampled points converges to a uniform stationary distribution on the manifold. Moreover, we observe that this stationary distribution is reached quickly, within 5-10 iterations. In (d-f) of the same Fig. \ref{fig:diffusion_burst}, we show $p_\theta(x'|x)$ by drawing a large number of points from a single-step random walk starting from the same seed point. As can be seen, a single step of $p_\theta(x'|x)$ covers a large part of the latent space.

\subsection{Cluster conditional data generation}
In this section, we deal with the problem of generating samples from data with multiple clusters in an unsupervised fashion (i.e. no a priori knowledge of the cluster structure).  Clustered data creates a problem for many generative models, as the topology of the latent space (i.e. normal distribution) differs from the topology of the data space with multiple clusters.  

First we show that our method is capable of generating new points from a particular cluster given an input point from that cluster.  This generation is done in an unsupervised fashion, which is a different setting from the approach of conditional VAEs \cite{sohn2015learning} that require training labels.  We demonstrate this property on MNIST \cite{mnist} in Figure \ref{fig:clusterMNIST}, and show that the newly generated points after a short diffusion time remain in the same class as the seeded image.

\begin{figure}[tb!]
\centering
\subfigure{
    \includegraphics[width=1.8cm, height=1.8cm]{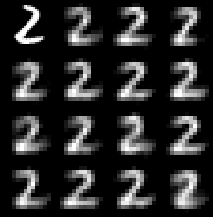}}
\subfigure{
    \includegraphics[width=1.8cm, height=1.8cm]{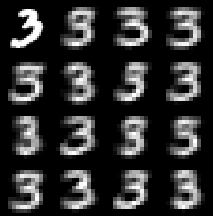}}
\subfigure{
    \includegraphics[width=1.8cm, height=1.8cm]{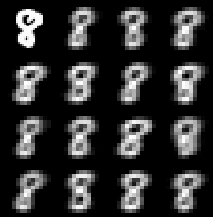}}
\subfigure{
    \includegraphics[width=1.8cm, height=1.8cm]{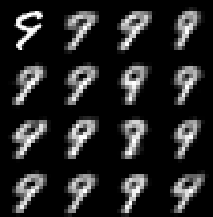}}
\caption{An example of cluster conditional sampling with our method, given a seed point (top left of each image grid). The VDAE is able to produce examples via the random walk that stay approximately within the cluster of the seed point, without any supervised knowledge of the cluster.}
\label{fig:clusterMNIST}
\end{figure}

The problem of addressing differing topologies between the data and the latent space of a generative model has been acknowledged in recent works on rejection sampling \cite{azadi2018discriminator,turner2018metropolis}.  Rejection sampling of neural networks consists of generating a large collection of samples using a standard GAN, and then designing a probabilistic algorithm to decide in a {\it post-hoc} fashion whether the points were truly in the support of the data distribution $p(x)$.

In the following experiment, we compare to a standard example in the literature for rejection sampling in generative models (see \cite{azadi2018discriminator}). The data consists of nine bounded spherical densities with significant minimal separation, lying on a $5 \times 5$ grid. A GAN struggles to avoid generating points in the gaps between these densities, and thus requires the post-sampling rejection analysis described in \cite{azadi2018discriminator}. Conversely, our model creates a latent space that separates each of these clusters into their own coordinates and generates only points that in the neighborhood of the support of $p(x)$. Figure \ref{fig:rejectionsampling} shows that this results in significantly fewer points generated in the gaps between clusters. Our VDAE architecture is described in \ref{sec:architectures}, GAN and DRS-GAN architectures are as described in \cite{azadi2018discriminator}.

\subsection{Quantitative comparisons of generative models}
\label{sec:empirical_bilip}

For this comparison, we consider seven datasets: three synthetic (circle, torus, Stanford bunny \cite{bunny}) four involving natural images (rotating bulldog, Frey faces, MNIST, COIL-20). The $\beta$ parameter in the $\beta$-VAE is optimized via a cross validation procedure. see Appendix for a complete description of the datasets. We report the mean and standard deviation of the Gromov-Wasserstein distance \cite{memoli2011gromov} and median bi-Lipschitz over 5 runs in Table \ref{tab:gwAndBilipTable}. We further evaluate the results using kernel Maximum Mean Discrepancy \cite{gretton2012kernel}, see Table \ref{tab:mmdtable} in the Appendix. 

By constraining our latent space to be the diffusion embedding of the data, our method finds a mapping that automatically enjoys the homeomorphic properties of an ideal mapping, and this is reflected in the low values of the local bi-Lipschitz constant. Conversely, other methods do not consider the topology of the data in the prior distribution. This is especially apparent in the $\beta$-VAE and SVAE, which must generate from the entirety of the input distribution $X$ because they minimize a reconstruction loss. Interestingly, the mode collapse tendency of GANs alleviate the pathology of the bi-Lipschitz constant by allowing the GAN to focus on a subset of the distribution --- but this comes at the cost of collapse to a few modes of the dataset. Our method is able to reconstruct the entirety of $X$ while simultaneously maintaining a low local bi-Lipschitz constant.
\begin{figure}[t!]
\centering
\subfigure{
    \includegraphics[width=1.8cm, height=1.8cm]{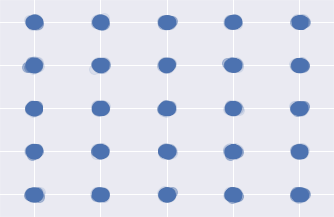}}
\subfigure{
    \includegraphics[width=1.8cm, height=1.8cm]{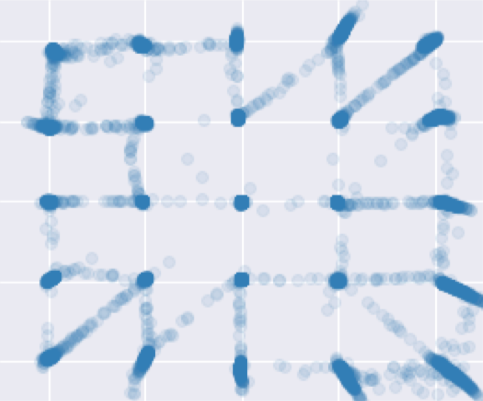}}
\subfigure{
    \includegraphics[width=1.8cm, height=1.8cm]{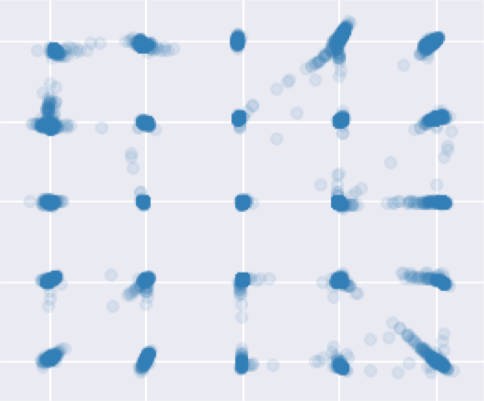}}
\subfigure{
    \includegraphics[width=1.8cm, height=1.8cm]{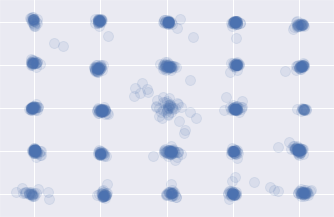}}
\caption{Comparison of samples from our method against several others on a $5 \times 5$ Gaussian grid. Left-right are original data, GAN, DRS-GAN, and VDAE (our method). GAN and DRS-GAN samples taken from \cite{azadi2018discriminator}.}
\label{fig:rejectionsampling}
\end{figure}

\begin{table}[b!]
\tiny
 \centering
    \begin{tabular}{|l|l|l|l|l|}
    \hline
    \textbf{G-W} & WGAN   & $\beta$-VAE   &
    SVAE & VDAE \\
    \hline
    Circle & 14.9(6.8) & 46.1(9.7) &  7.9(2.2)  & \textbf{2.6(1.3)} \\
    Torus & 6.4 (1.9) & 11.7(1.6) & 23.4(2.8)  & \textbf{4.9 (0.5)} \\
    Bunny & 11.4(3.9) & 32.8(5.9) & 14.3(5.5) & \textbf{2.9(1.1)}  \\
    Bulldog & 117.3(8.4) & 61.3(9.7) &  53.9(7.6) & \textbf{15.3(1.7)} \\
    Frey  & 18.1(2.9) & 19.8(4.6) & 13.4(3.6) & \textbf{9.7(3.3)} \\
    MNIST & \textbf{3.6(0.9}) & 10.2(3.3) & 15.2(4.9) & 14.4(3.5) \\
    COIL-20 & 16.5(2.4) & 23.8(5.9) &  32.1(4.9)  & \textbf{11.8(2.1)} \\
    \hline
    \end{tabular}%
    \quad
    \begin{tabular}{|l|l|l|l|l|}
    \hline
    \textbf{biLip} & {WGAN} & {$\beta$-VAE} & {SVAE} & {VDAE} \\
    \hline
    Circle & 4.6 & 3.7  &  3.6  & \textbf{3.1} \\
    Torus & \textbf{3.3}  & 7.9  & 9.5 & 4.8 \\
    Bunny & 5.6   & 34.4 &  35.6  & \textbf{5.5} \\
    Bulldog & 17.4  & 7.6   &  12.9   & \textbf{6.8} \\
    Frey  & 37    & 33.3  & 39.4 & \textbf{29.7} \\
    MNIST & 1.9   & \textbf{1.6}   &  6.7  & 8.4 \\
    COIL-20 & 4.7   & 3.8   & 8.4 & \textbf{3.1} \\
    \hline
    \end{tabular}%
    \caption{Left: means and standard deviations of the Gromov-Wasserstein (G-W) distance between original and generated samples. Right: medians of the bi-Lipschitz measure.}
  \label{tab:gwAndBilipTable}%
\end{table}%

\section{Discussion}

In this work, we have shown that VDAEs provide an intuitive, effective, and mathematically rigorous solution to \textit{prior mismatch}, which is a common cause for posterior collapse in latent variable models. Unlike prior works, we do not require user specification of the prior --- our method infers the prior geometry directly from the data, and we observe that it achieves state-of-the-art results on several real and synthetic datasets. Finally, our work points to several directions for future research: (1) can we leverage recent architectural advances to VAEs to further improve VDAE performance, and (2) can we leverage manifold learning techniques to improve latent representations in other methods?

\section*{Acknowledgements}
HL, AC, and XC are supported by NSF (DMS-1819222 and -1818945).
XC and OL are also partly supported by NIH (RGM131642A); AC by NSF (DMS-2012266) and the Russell Sage Foundation (2196); XC by NSF (DMS-1820827), NIH (R01GM131642), and the Sloan Foundation; and OL by NIH (R01HG008383).

\nocite{sajjadi2018assessing}

%\begin{table}[t]
 %   \centering
  %  \begin{tabular}{|c|c|c|c|c|}
   % \hline
    %  {\bf Validation Measure}   & {\bf WGAN} & {\bf VAE} & {\bf SVAE} & {\bf VDAE}  \\
    %  \hline
     %    &  & & & \\
    %\hline
    %\end{tabular}
    %\caption{\textcolor{red}{Caption}}
    %\label{tab:bilip}
%\end{table}

%\bibliographystyle{splncs04}
%\bibliographystyle{natbib}
% \bibliography{example_paper}
% \bibliographystyle{icml2020}
\bibliographystyle{splncs04}
\bibliography{example_paper}

\newpage
\appendix
\section{Appendix}
\setcounter{figure}{0} \renewcommand{\thefigure}{A.\arabic{figure}}
\setcounter{table}{0} \renewcommand{\thetable}{A.\arabic{table}}
\setcounter{equation}{0} \renewcommand{\theequation}{A.\arabic{equation}}
\setcounter{page}{1} \renewcommand{\thepage}{A.\arabic{page}}

\subsection{Derivation of Local Evidence Lower Bound (Eq. \ref{eq:manifold_elbo})}
\label{eq:derive-evidence-lower-bound}
We begin with taking the log of the random walk transition likelihood,
\begin{align}
    \log p_\theta(x'|x) &= \log \int_z' p_\theta(x',z'|x) dz' \\
    &= \log \int_z p_\theta(x'|z',x) p(z'|x) \frac{q(z')}{q(z')} dz' \\
    &= \log \mathbb{E}_{z' \sim q(z')} \left[ p_\theta(x'|z',x) \frac{p(z'|x)}{q(z')} \right] \\
    &\geq \mathbb{E}_{z' \sim q(z')} \left[\log p_\theta(x'|z',x) \right] + \mathbb{E}_{z' \sim q(z')} \left[\log \frac{p(z'|x)}{q(z')} \right] \\
    &\geq \mathbb{E}_{z' \sim q(z')} \left[\log p_\theta(x'|z',x) \right] + D_{KL} [q(z') || p(z'|x)]
\end{align}

where $q(z')$ is an arbitrary distribution. We let $q(z')$ to be the conditional distribution $q(z'|x)$. Furthermore, if we make the simplifying assumption that $p_\theta(x'|z',z) = p_\theta(x'|z')$, then we obtain Eq. \ref{eq:manifold_elbo}
\begin{equation}
    \log p_\theta(x'|x) 
    \geq - D_{KL}(q_\phi(z'|x) || p_\theta(z'|x)) + \mathbb{E}_{z' \sim q_\phi(z'|x)}\log p_\theta(x'|z').
\end{equation}

% \section{Results in \cite{jones2008manifold}}
\subsection[Results in]{Results in \cite{jones2008manifold}}
To state the result in \cite{jones2008manifold}, we need the following set-up:

(C1) ${\cal M}$ is a $d$-dimensional smooth compact manifold, possibly having boundary, equipped with a smooth (at least $C^2$) Riemannian metric $g$;

We denote the geodesic distance by $d_{\cal M}$, and the geodesic ball centering at $x$ with radius $r$ by $B_{\cal M}(x,r)$.
Under (C1), 
for each point $x \in {\cal M}$, 
there exists $r_{\cal M}(x)$ which is the inradius, 
that is, $r$ is the largest number s.t. $B_{\cal M}(x,r)$ is contained ${\cal M}$. 
%Let the $U$ be the geodesic ball with radius $r_{\cal M}(z)$ centering at $x$, the neighborhood $U$ can be charted by the exponential map at $x$. 

Let $\triangle_{\cal M}$ be the Laplacian-Beltrami operator on ${\cal M}$ with Neumann boundary condition, which is self-adjoint on $L^2(M, \mu)$, $\mu$ being the Riemannian volume given by $g$.
Suppose that ${\cal M}$ is re-scaled to have volume 1.
The next condition we need concerns the spectrum of the manifold Laplacian

(C2) $\triangle_{\cal M}$ has discrete spectrum, 
and the eigenvalues $\lambda_0 \le \lambda_1 \le \cdots$ satisfy the Weyl's estimate, i.e. exists constant $C$ which only depends on ${\cal M}$ s.t.
\[
|\{ j: ~ \lambda_j \le T \}| \le C T^{d/2}.
\]

Let $\psi_j$ be the eigenfunction  associated with $\lambda_j$,
$\{ \psi_j \}_j$ form an orthonormal bases of $L^2(M, \mu)$.
The last condition is

(C3) The heat kernel 
(defined by the heat equation on ${\cal M}$) 
has the spectral representation as 
\[
K_t(x,y) = \sum_{j=0}^\infty e^{-t\lambda_j} \psi_j(x)\psi_j(y).
\]

\begin{thm}[Thm 2 \cite{jones2008manifold}, simplified version]
Under the above setting and assume (C1)-(C2), 
then there are positive constants $c_1, c_2, c_3$ 
which only depend on ${\cal M}$ and $g$, 
s.t. 
for any $x \in {\cal M}$, $r_{\cal M}(x)$ being the inradius,
there are $d$ eigenfunctions of $\triangle_{\cal M}$,
$\psi_{j_1}, \cdots, \psi_{j_d}$,
which collectively give a mapping $\Psi: {\cal M} \to \mathbb{R}^d$ by
\[
\Psi_x (x) = (\psi_{j_1}(x), \cdots,  \psi_{j_d}(x))
\]
satisfying that $\forall y,y' \in B(x, c_1 r_{\cal M}(x))$,
\[
c_2 r_{\cal M}(z)^{-1} d_{\cal M}(y,y')
\le
\|\Psi_x (y) - \Psi_x (y') \|
\le 
c_3 r_{\cal M}(z)^{-1-d/2} d_{\cal M}(y,y').
\]
That is, $\Psi$ is bi-Lipschitz on the neighborhood $B(x, c_1 r_{\cal M}(x)) $
with the Lipschitz constants indicated as above.
The subscript $x$ in $\Psi_x$
emphasizes that the indices $j_1, \cdots, j_d$
may depend on $x$.
\end{thm}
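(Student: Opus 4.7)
\medskip

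\noindent\textbf{Proof proposal (sketch).} The plan is to reproduce the Jones--Maggioni--Schul construction, in which the $d$ selected eigenfunctions arise as the ``best'' $d$ components of a local heat-kernel parametrization. I would work at the scale $t \asymp r_{\cal M}(x)^2$, at which $K_t(x,\cdot)$ is concentrated in the geodesic ball $B_{\cal M}(x, r_{\cal M}(x))$ and the manifold can be trivialized by normal coordinates. In these coordinates, the short-time heat kernel is $C^2$-close to the Euclidean heat kernel, with error controlled by curvature (and therefore by ${\cal M}$), which is the source of all the geometric constants $c_1, c_2, c_3$ in the statement.

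First I would build candidate local coordinates as heat-smoothed bump functions. Fix smooth cutoffs $h_1,\dots,h_d$ on $B_{\cal M}(x, r_{\cal M}(x))$ whose pullbacks to normal coordinates are proportional to the $d$ Euclidean coordinate functions times a radial cutoff, and define
\[
F_k^t(y) \;:=\; \int_{\cal M} K_t(y,z)\, h_k(z)\, d\mu(z), \qquad k=1,\dots,d.
\]
Short-time heat kernel estimates on $(\mathcal{M},g)$, together with the comparison to the Euclidean heat kernel at scale $r_{\cal M}(x)$, show that the Jacobian of $F^t = (F_1^t,\dots,F_d^t)$ at $x$ is nondegenerate with singular values controlled from below by $c_2 r_{\cal M}(x)^{-1}$ and from above by $c_3 r_{\cal M}(x)^{-1-d/2}$, and hence $F^t$ is bi-Lipschitz on $B_{\cal M}(x, c_1 r_{\cal M}(x))$ with the claimed constants.

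The second step is to replace $F^t$ by a selection of eigenfunctions. Expand $F_k^t = \sum_j e^{-t\lambda_j}\langle h_k,\psi_j\rangle \psi_j$. Because $e^{-t\lambda_j}$ is negligible once $t\lambda_j \gg 1$, only the eigenfunctions with $\lambda_j \lesssim 1/t \asymp r_{\cal M}(x)^{-2}$ contribute meaningfully, and by Weyl's law (C2) there are only $N \lesssim C\, t^{-d/2}$ of them. Consider the $d\times N$ matrix $A$ of coefficients $a_{k,j} = e^{-t\lambda_j}\langle h_k,\psi_j\rangle$. The identity $F^t = A\,\Psi_{[N]}$ (where $\Psi_{[N]}$ stacks the first $N$ eigenfunctions) combined with the nondegeneracy of $DF^t(x)$ implies that $A\cdot D\Psi_{[N]}(x)$ has singular values of the required size. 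By a pigeonhole/Cauchy--Binet argument applied to the Gram matrix $A A^\top$, one can then extract indices $j_1,\dots,j_d$ so that the restricted $d\times d$ submatrix of $A$ and the corresponding gradient block $D(\psi_{j_1},\dots,\psi_{j_d})(x)$ are simultaneously well-conditioned, with constants depending only on ${\cal M}$.

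With the eigenfunctions $\psi_{j_1},\dots,\psi_{j_d}$ fixed, the final step is to promote the infinitesimal nondegeneracy at $x$ to a uniform bi-Lipschitz estimate on $B_{\cal M}(x, c_1 r_{\cal M}(x))$. The upper bound $\|\Psi_x(y)-\Psi_x(y')\| \le c_3 r_{\cal M}(x)^{-1-d/2} d_{\cal M}(y,y')$ follows from gradient estimates on eigenfunctions (Bernstein/Cheng--Yau-type bounds in terms of $\lambda_{j_k}^{1/2} \lesssim r_{\cal M}(x)^{-1}$, plus one loss from the Weyl count that gives the $r_{\cal M}(x)^{-d/2}$ factor). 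For the lower bound, I would integrate the Jacobian along geodesics joining $y$ and $y'$: the Jacobian varies continuously, so on a sufficiently small ball (which dictates the constant $c_1$) its smallest singular value remains at least $\tfrac12$ of its value at $x$, yielding $c_2 r_{\cal M}(x)^{-1} d_{\cal M}(y,y') \le \|\Psi_x(y)-\Psi_x(y')\|$. The main obstacle I expect is the selection step: turning the nondegeneracy of the full heat-kernel map $F^t$ into a quantitative choice of $d$ eigenfunctions with constants depending only on ${\cal M}$ (not on $x$) requires careful bookkeeping of the pigeonhole constants through Weyl's law, and is where the dependence on the inradius $r_{\cal M}(x)$ is genuinely sharp.
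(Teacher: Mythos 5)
This statement is not proved in the paper at all: it is quoted, in simplified form, from \cite{jones2008manifold} as background material, so there is no internal proof to compare your attempt against; the relevant benchmark is the original Jones--Maggioni--Schul argument. Measured against that, your sketch follows the same overall strategy --- work at heat-kernel scale $t \asymp r_{\mathcal{M}}(x)^2$, compare with the Euclidean heat kernel in normal coordinates, build provisional local coordinates from heat-smoothed functions, truncate the spectral expansion of the heat kernel using Weyl's estimate (C2), select $d$ eigenfunctions, and then upgrade pointwise nondegeneracy of the differential to a bi-Lipschitz estimate on a ball of radius $c_1 r_{\mathcal{M}}(x)$, with the upper constant $c_3 r_{\mathcal{M}}(x)^{-1-d/2}$ coming from sup-norm/gradient bounds for eigenfunctions with $\lambda_j \lesssim r_{\mathcal{M}}(x)^{-2}$. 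So the roadmap is the right one.

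As a proof, however, there is a genuine gap exactly where you flag ``careful bookkeeping'': the selection step. Asserting that ``a pigeonhole/Cauchy--Binet argument applied to $AA^\top$'' produces indices $j_1,\dots,j_d$ for which the coefficient submatrix and the gradient block $D(\psi_{j_1},\dots,\psi_{j_d})(x)$ are simultaneously well-conditioned is essentially the content of the theorem, and it does not follow from nondegeneracy of $DF^t(x)$ alone: one needs quantitative two-sided bounds on the heat kernel and its spatial gradient at scale $t \asymp r_{\mathcal{M}}(x)^2$ to guarantee that enough spectral mass (and gradient mass) sits on \emph{some} $d$ eigenfunctions, with constants depending only on $\mathcal{M}$ and $g$ and only the indices allowed to vary with $x$; this is the technical core of \cite{jones2008manifold} and is not carried out in your sketch. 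Similarly, for the lower Lipschitz bound, ``the Jacobian varies continuously'' is not sufficient to make $c_1$ a constant depending only on $\mathcal{M}$ and $g$: you need a quantitative second-derivative (Hessian) bound on the selected eigenfunctions, at the same spectral scale, to control how fast the smallest singular value of the Jacobian can degrade away from $x$ and thereby fix $c_1$ uniformly. With those two quantitative ingredients supplied, your outline matches the cited argument; without them it remains a sketch rather than a proof.
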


\subsection{Proofs}\label{proofs}
\begin{proof}[of Thm \ref{thm:main}]
The proof of Thm \ref{thm:main} is actually a simple extension of the following Thm, Thm \ref{thm:main_appendix}, which needs to be proved for each individual extrinsic coordinate $X_k$, hence the additional factor of $m$ coming from the $L2$ norm of $m$ functions.
\end{proof}

\begin{thm}\label{thm:main_appendix}
    Let $\mathcal{M} \subset \mathbb{R}^m$ be a smooth $d$-dimensional manifold, $\psi(\mathcal{M}) \subset \mathbb{R}^{D}$ be the diffusion map for $D\ge d$ large enough to have a subset of coordinates that are locally bi-Lipschitz. Let one of the $m$ extrinsic coordinates of the manifold be denoted $X(\psi(x))$ for $x\in \mathcal{M}$. Then there exists a sparsely-connected ReLU network $f_N$, with $4DC_{\mathcal{M}}$ nodes in the first layer, $8dN$ nodes in the second layer, and $2N$ nodes in the third layer, such that
    \begin{equation}
        \|X - {f_N}\|_{L^2(\psi(\mathcal{M}))} \leq \frac{C_\psi}{\sqrt{N}}
    \end{equation}
    where $C_\psi$ depends on how sparsely $X(\psi(x)) \big|_{U_i}$ can be represented in terms of the ReLU wavelet frame on each neighborhood $U_i$, and $C_{\mathcal{M}}$ on the curvature and dimension of the manifold $\mathcal{M}$.
\end{thm}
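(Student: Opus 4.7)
\textbf{Proof proposal for Theorem \ref{thm:main_appendix}.} The plan is to combine three ingredients: (i) a finite cover of $\psi(\mathcal{M})$ by bi-Lipschitz patches given by Jones--Maggioni--Schul, (ii) a ReLU partition of unity subordinate to this cover realized in the first network layer, and (iii) the ReLU-wavelet $N$-term approximation machinery of Shaham--Cloninger--Coifman applied patchwise. Throughout, the target is the scalar Lipschitz function $F:=X\circ\psi^{-1}$, viewed locally as a function on Euclidean neighborhoods in $\mathbb{R}^D$.

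First I would invoke the bi-Lipschitz result of \cite{jones2008manifold} (the simplified version quoted above) to cover $\mathcal{M}$ by finitely many geodesic balls $\{U_i\}_{i=1}^{M}$, $M\le C_{\mathcal{M}}$, where $C_{\mathcal{M}}$ depends on the curvature and intrinsic dimension (this is a standard volume/packing bound using the Weyl estimate). On each $U_i$ there is a subset $S_i\subset\{1,\dots,D\}$ of size $d$ such that the restriction of $\psi$ to the chosen coordinates is bi-Lipschitz; hence $\psi(U_i)\subset\mathbb{R}^D$ is contained in a thickened $d$-dimensional slab, and the inverse map $\psi^{-1}|_{\psi(U_i)}$ is Lipschitz with a constant controlled by the inradius. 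Since the extrinsic coordinate $X:\mathcal{M}\to\mathbb{R}$ is smooth and hence Lipschitz, $F|_{\psi(U_i)}$ is Lipschitz as a function of the $d$ active diffusion coordinates in $S_i$, with constant absorbed into $C_\psi$.

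Next I would construct a ReLU partition of unity $\{\eta_i\}_{i=1}^{M}$ subordinate to $\{\psi(U_i)\}$ using differences of one-dimensional ramp bumps in each active coordinate direction, analogous to the patch-selector construction in \cite{shaham2018provable}. Each $\eta_i$ can be represented as a sum of four ReLU units per active coordinate, and since there are at most $D$ coordinates to screen per patch and $M\le C_{\mathcal{M}}$ patches, the first hidden layer requires at most $4DC_{\mathcal{M}}$ nodes. On each patch, I then apply the ReLU wavelet frame construction of \cite{shaham2018provable}: a $d$-dimensional ReLU wavelet at a given scale/translation is synthesized by $8d$ second-layer nodes (tensor products of ramps) followed by $2$ third-layer nodes to form the wavelet itself. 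Taking $N$ such wavelet terms per patch (with scales and translations chosen by a greedy best-$N$-term rule) yields $8dN$ nodes in the second layer and $2N$ in the third. The resulting network $f_N$ outputs $\sum_i \eta_i(\psi(x))\,\tilde{F}_i(\psi(x))$, where $\tilde{F}_i$ is the $N$-term wavelet approximant of $F$ on $U_i$.

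For the error bound, on each patch the Lipschitz property of $F|_{\psi(U_i)}$ implies that its ReLU-wavelet coefficients are $\ell^1$-summable; this is the content of $C_\psi$. The standard Stechkin-type estimate then gives an $L^2$ best-$N$-term error of order $\|c\|_{\ell^1}/\sqrt{N}$ on each patch. Because the partition of unity is bounded and supports overlap only a controlled number of times, the patchwise errors aggregate to $\|F - f_N\|_{L^2(\psi(\mathcal{M}))} \le C_\psi/\sqrt{N}$ after absorbing the geometric constants into $C_\psi$. The extension to Theorem \ref{thm:main} follows by running one such subnetwork per extrinsic coordinate $X_k$ in parallel and noting $\|\mathbf{X}-f_N\|_{L^2}^2 = \sum_{k=1}^{m}\|X_k - f_N^{(k)}\|_{L^2}^2$, which produces the $\sqrt{m}$ factor and multiplies the wavelet-layer widths by $m$ as claimed.

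The step I expect to be the main obstacle is verifying that the $\ell^1$ bound on wavelet coefficients genuinely holds for $F|_{U_i}$ with a constant independent of $i$, since the bi-Lipschitz constants from \cite{jones2008manifold} depend on the local inradius and the choice of $d$ active coordinates may vary from patch to patch. Making $C_\psi$ uniform requires either assuming a uniform lower bound on the inradius (which holds on smooth compact manifolds) or carefully tracking how the Lipschitz norms of $F|_{U_i}$ enter the $\ell^1$ control on wavelet coefficients; the ambient-coordinate splitting (doing one $X_k$ at a time, then summing in $L^2$) is precisely what keeps this tractable.
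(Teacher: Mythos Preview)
Your proposal is correct and follows essentially the same route as the paper: cover $\mathcal{M}$ by $C_{\mathcal{M}}$ patches on which the Jones--Maggioni--Schul bi-Lipschitz property holds, use the first layer (with $4D$ ReLU bumps per patch) to select the $d$ active diffusion coordinates, argue that the extrinsic coordinate $X$ is Lipschitz in those $d$ coordinates on each patch, deduce $\ell^1$ control on the ReLU-wavelet coefficients, and invoke the Shaham--Cloninger--Coifman $N$-term rate. Your anticipated obstacle---uniformity of the Lipschitz/$\ell^1$ constants across patches---is exactly the point the paper handles loosely by absorbing everything into $C_\psi$; the paper's proof is in fact less explicit than yours about the partition-of-unity aggregation and the bookkeeping of $N$ across patches.
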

\begin{proof}[of Thm \ref{thm:main_appendix}]

The proof borrows from the main theorem of \cite{shaham2018provable}.  We adopt this notation and summarize the changes in the proof here.  For a full description of the theory and guarantees for neural networks on manifolds, see \cite{shaham2018provable}. 
  Let  $C_\mathcal{M}$ be the number of neighborhoods $U_i = B(x_i,\delta)\cap \mathcal{M}$ needed to cover $\mathcal{M}$ such that $\forall x,y\in U_i$, $(1-\epsilon)\|x - y\| \le d_\mathcal{M}(x,y) \le (1+\epsilon)\|x-y\|$.    Here, we choose $\delta = \min(\delta_\mathcal{M},\kappa^{-1}\rho)$ where $\delta_\mathcal{M}$ is the largest $\delta$ that preserves locally Euclidean neighborhoods and $\kappa^{-1}\rho$ is the smallest value from \cite{jones2008manifold} such that every neighborhood $U_i$ has a bi-Lipschitz set of diffusion coordinates.  

Because of the locally bi-Lipschitz guarantee from \cite{jones2008manifold}, we know for each $U_i$ there exists an equivalent neighborhood $\widetilde \psi(U_i)$ in the diffusion map space, where $\widetilde \psi(x) = \begin{bmatrix}\psi_{i_1}(x), & ..., & \psi_{i_d}(x)\end{bmatrix}$.  Note that the choice of these $d$ coordinates depends on the neighborhood $U_i$.  Moreover, we know the Euclidean distance on $\psi(U_i)$ is locally bi-Lipschitz w.r.t. $d_\mathcal{M}(\cdot,\cdot)$ on $U_i$.  

First, we note that as in \cite{shaham2018provable}, the first layer of a neural network is capable of using $4D$ units to select the subset of $d$ coordinates $\widetilde \psi(x)$ from $\psi(x)$ for $x\in U_i$ and zeroing out the other $D-d$ coordinates with ReLU bump functions.  Then we can define $X(\widetilde \psi(x)) = X(\psi(x))$ on $x\in U_i$.

Now to apply the theorem from \cite{shaham2018provable}, we must establish that $X\big|_{U_i}: \widetilde\psi(U_i)\rightarrow \mathbb{R}$ can be written efficiently in terms of ReLU functions.    Because of the manifold and diffusion metrics being bi-Lipschitz, we know at a minimum that $\widetilde\psi$ is invertible on $\widetilde\psi(U_i)$.  Because of this invertibility, we will slightly abuse notation and refer to $X(\psi(x)) = X(x)$, where this is understood to be the extrinsic coordinate of the manifold at the point $x$ that cooresponds to $\psi(x)$.
we also know that $\forall x,y\in U_i$, 
\begin{align*}
|X(\widetilde\psi(x)) - X(\widetilde\psi(y))| &= |X(x) - X(y)| \\
&\leq \max_{z\in U_i} \|\nabla X(z)\| d(x,y) \\
&\leq \frac{\max_{z\in U_i} \|\nabla X(z)\|}{1-\epsilon} \|\widetilde\psi(x) - \widetilde\psi(y)\|,
\end{align*}
where $\nabla X(z)$ is understood to be the gradient of $X(z)$ at the point $z\in \mathcal{M}$.
This means $X(\widetilde\psi(x))$ is a Lipschitz function w.r.t. $\widetilde\psi(x)$.
Because $X(\widetilde\psi(x))$ Lipschitz continuous, it can be approximated by step functions on a ball of radius $2^{-\ell}$ to an error that is at most $\frac{\max_{z\in U_i} \|\nabla X(z)\|}{1-\epsilon}  2^{-\ell}$.  This means the maximum ReLU wavelet coefficient is less than $\frac{\max_{z\in U_i} \|\nabla X(z)\|}{1-\epsilon}  (2^{-\ell} + 2^{-\ell+1})$.  This fact, along with the fact that $\widetilde\psi(U_i)$ is compact, gives the fact that on $\widetilde\psi(U_i)$, set of ReLU wavelet coefficients is in $\ell^1$.  And from \cite{shaham2018provable}, if on a local patch the function is expressible in terms of ReLU wavelet coefficients in $\ell^1$, then there is an approximation rate of $\frac{1}{\sqrt{N}}$ for $N$ ReLU wavelet terms. 
\end{proof}

\begin{proof}[of Thm \ref{thm:burst}]
We borrow from \cite{singer2008non} to prove the following result.  Given that the bulk of the distribution $q$ lies inside $\psi(U_{z_0})$, we can consider only the action of $f_N$ on $\psi(U_{z_0})$ rather than on the whole space.  Because the geodesic on $U$ is bi-Lipschitz w.r.t. the Euclidean distance on the diffusion coordinates (the metric on the input space), we can use the results from \cite{singer2008non} and say that on $\psi(U_{z_0})$ the output covariance matrix is characterized by the Jacobian of the function $f_N$ mapping from Euclidean space (on the diffusion coordinates) to the output space, at the point $z_0$. So the covariance of the data lying insize $\psi(U_{z_0})$ is $J_{z_0} \Sigma J^T_{z_0}$, with an $O(\epsilon)$ perturbation for the fact that $\epsilon$ fraction of the data lies outside $\psi(U_{z_0})$.

The effective rank of $C$ being at most $d$ comes from the locally bi-Lipschitz property.  We know $X(\psi(x))$ only depends on the $d$ coordinates $\widetilde \psi(x)$ as in the proof of Thm \ref{thm:main}, which implies $f_N(\psi(x))$ satisfies a similarly property if $f_N$ fully learned $X(\psi(x))$.  Thus, while $J\in \mathbb{R}^{m\times D}$, it is at most rank $d$, which means $J\Sigma J^T$ is at most rank $d$ as well.  

\end{proof}
\subsection{Spectral Net}

\subsection{Additional Experimental Result}
To evaluate the quality of the generated images in the Bulldog dataset, we use the Frechet inception distance (FID). We train the different generative models $5$ times and compute the FID between source and generated images. In table \ref{tab:fid} we present the mean and standard deviations of the FID.

\begin{table}[htb]
  \centering
   \begin{tabular}{|l|l|l|l|l|}
       \hline
    \textbf{FID} & GAN   & VAE   & {SVAE} & {VDAE} \\
        \hline
    Bulldog & 264.4(18.4) & 245.7(14.7) & 400.6 (6.2)  & 144.3(12.6) \\

        \hline
    \end{tabular}%
    \caption{Frechet inception distance (FID) on the Bulldog dataset, mean and standard deviation.}
  \label{tab:fid}%
\end{table}%

\begin{table}[htb]
  \centering
   \begin{tabular}{|l|l|l|l|l|}
       \hline
    \textbf{MMD} & GAN   & VAE   & {SVAE} & {VDAE} \\
        \hline
    Circle & 9.3(11.1) & 8.3(4.4) & 8.1 (4.2)  & 7.3(4.3) \\
    Torus & 12.3 (4.7) & 63.3 (12.9) & 84.5(11.7) & 41.9 (4.1) \\
    Bunny & 175.6(68.6) & 725.8(3.8) & 601.7(41.1) &3.6(0.3) \\
    Bulldog & 741.8(88) & 167.3(16.4) & 213.7(13.1) & 9.68(3.44) \\
    Frey  & 34.9(5.1) & 39.3(6.1) & 29.4 & 47.0 \\
    MNIST & 3.5(0.6) & 27.9(1) & 20.6(1.2)  & 5.79(0.3)  \\
    COIL-20 & 3.3(0.9) & 39.2(9.6) &  55.7(4.7)  & 7.4(1.07) \\
        \hline
    \end{tabular}%
    \caption{Measures of similarity between training data and generated data using Maximum Mean Discrepancy. Comparisons are across a variety of synthetic and real data sets}
  \label{tab:mmdtable}%
\end{table}%

\begin{figure}[t]
    \centering
    %\textcolor{red}{Add figures here}
%    \subfigure[VDAE]{
%     \includegraphics[width=1.8cm]{img/bulldog/buldog_gen3.png}
%     }
%     \subfigure[SVAE]{
%     \includegraphics[width=1.8cm]{img/bulldog/buldog_spherical_vae_latent.png}
%     }
    \subfigure[VAE]{
    \includegraphics[width=0.45 \textwidth]{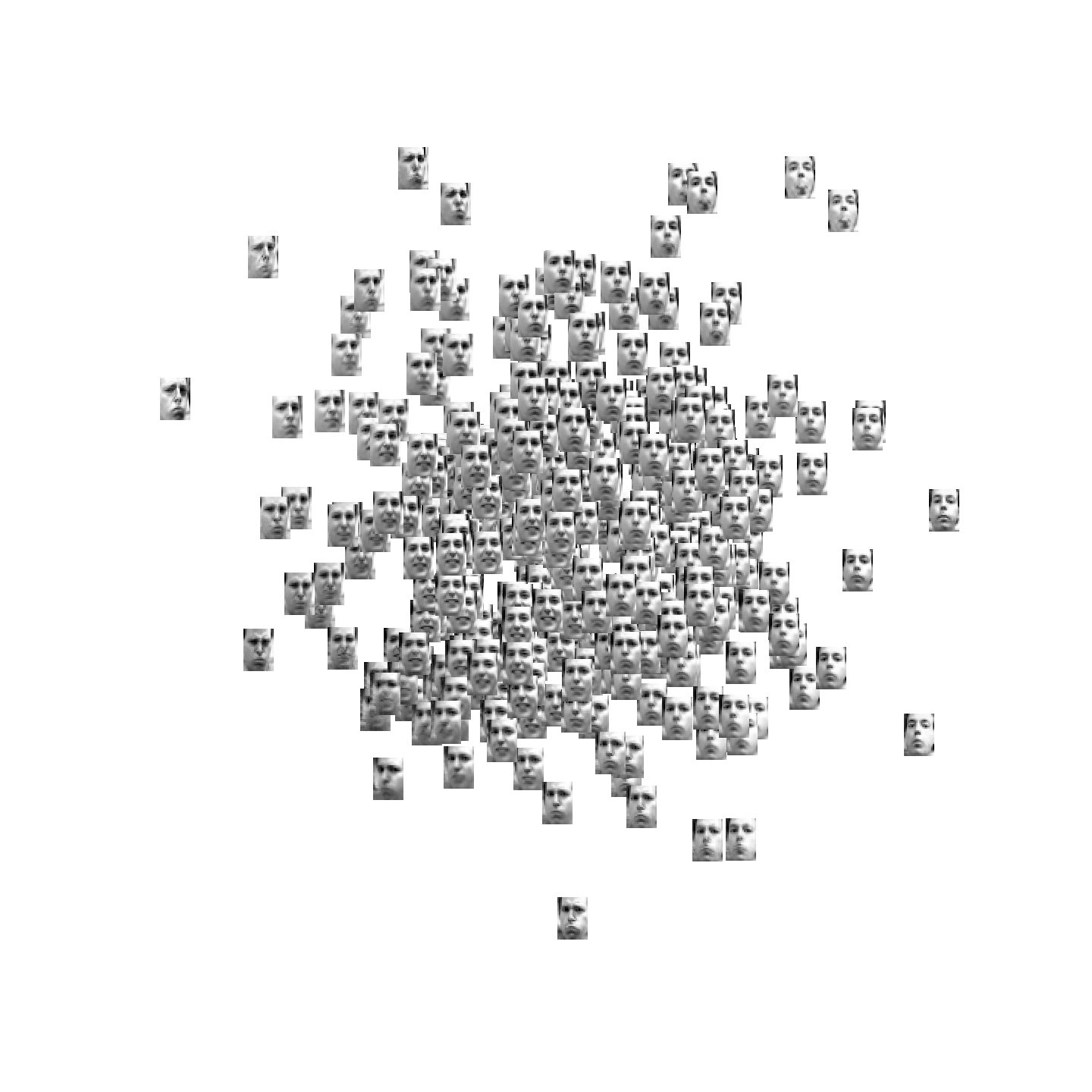}
    }
    \subfigure[GAN]{
    \includegraphics[width=0.45 \textwidth]{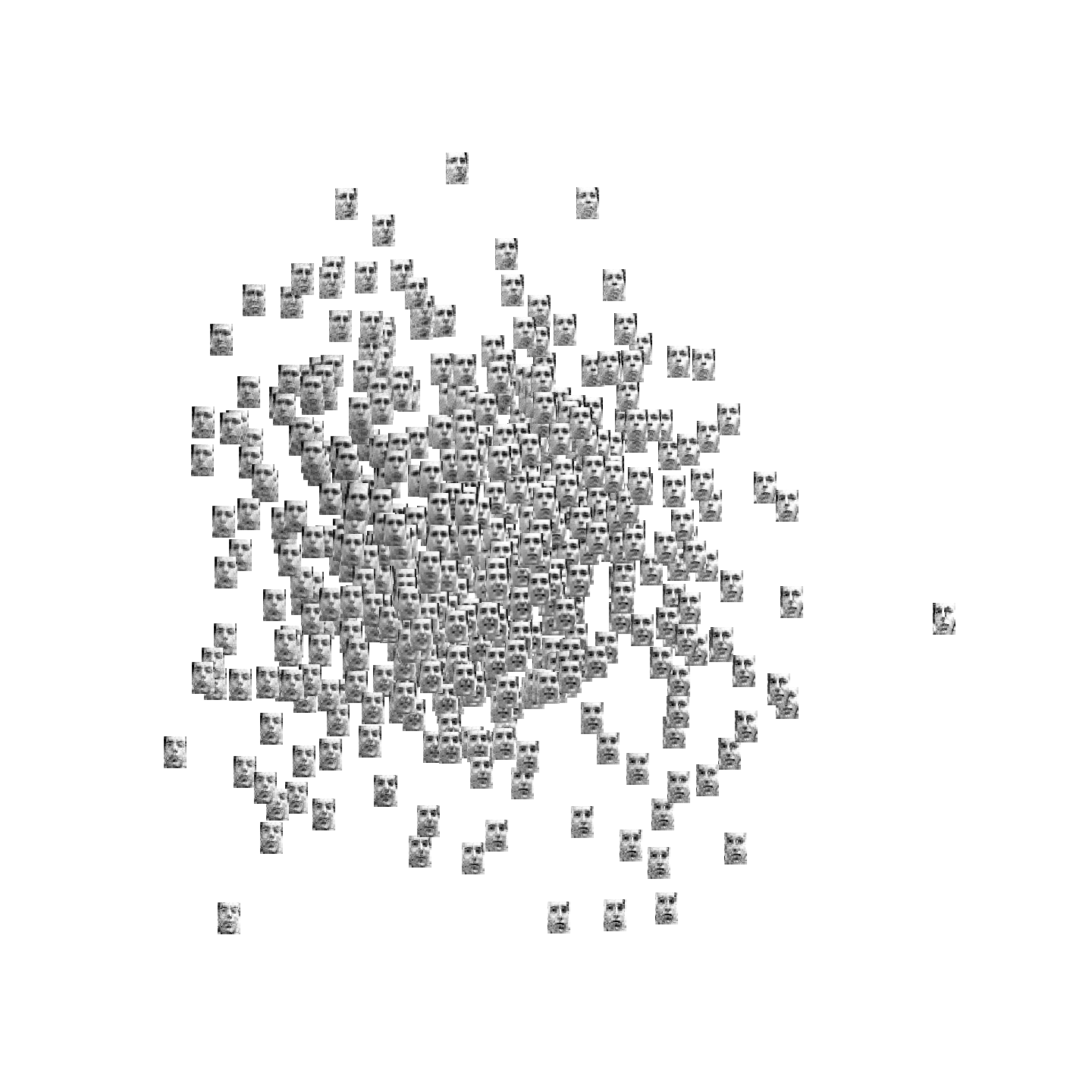}
    }
    \subfigure[SVAE]{
    \includegraphics[width=0.45 \textwidth]{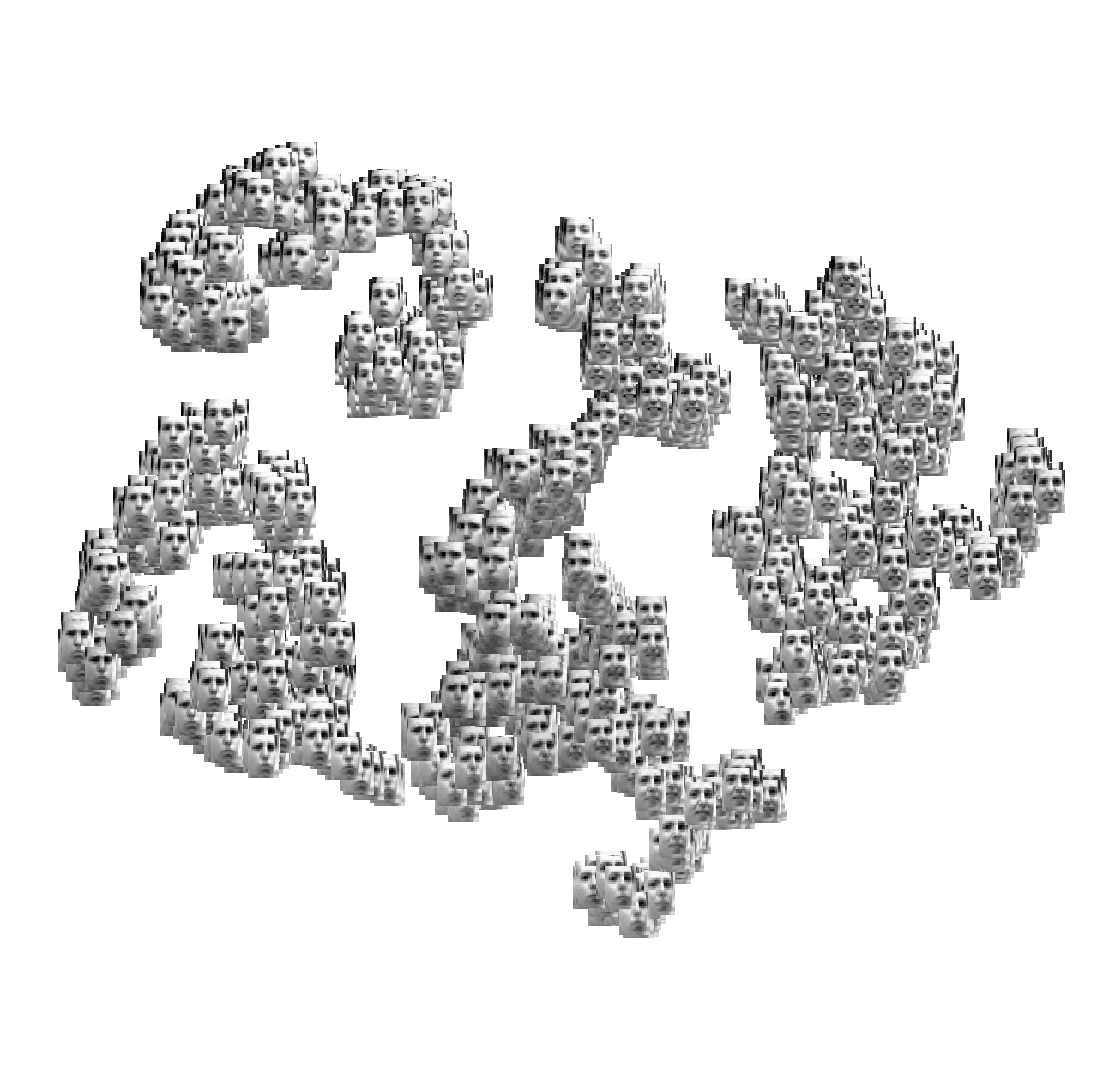}
    }
    \subfigure[VDAE]{
    \includegraphics[width=0.45 \textwidth]{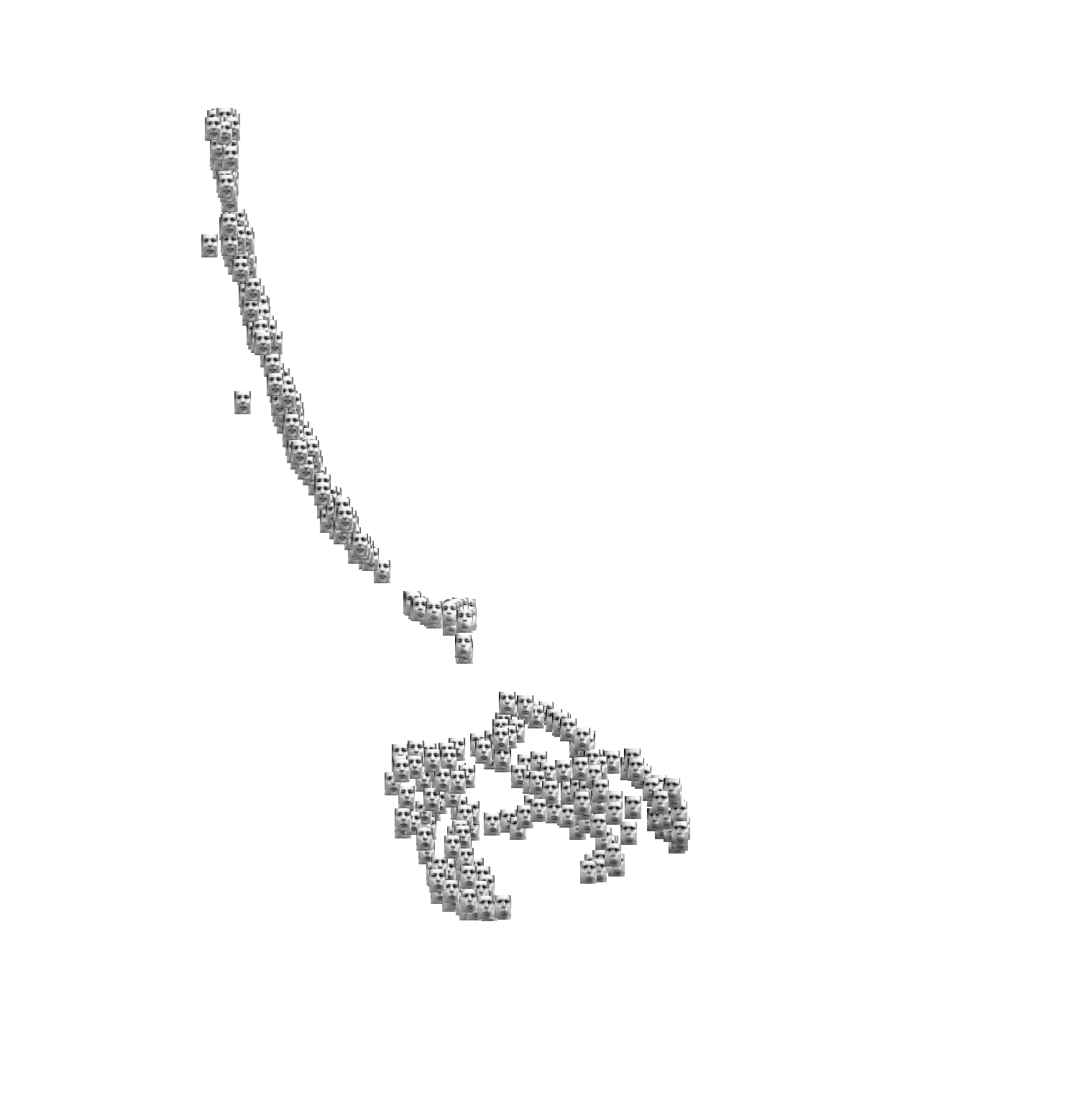}
    }
    
\caption{A tSNE plot of generated images from Frey data set. While the images from the VAE and GAN are compelling, they do not capture the geometric structure of the Frey faces dataset. This structure is much more apparent in the images generated by SVAE and VDAE. In particular, the VDAE has captured a linear structure in the data, which reflects the fact that the dataset was created from a video.} \label{fig:frey}
\end{figure}

\begin{figure}[t]
    \centering
    %\textcolor{red}{Add figures here}
%    \subfigure[VDAE]{
%     \includegraphics[width=1.8cm]{img/bulldog/buldog_gen3.png}
%     }
%     \subfigure[SVAE]{
%     \includegraphics[width=1.8cm]{img/bulldog/buldog_spherical_vae_latent.png}
%     }
    \subfigure[VAE]{
    \includegraphics[width=0.45 \textwidth]{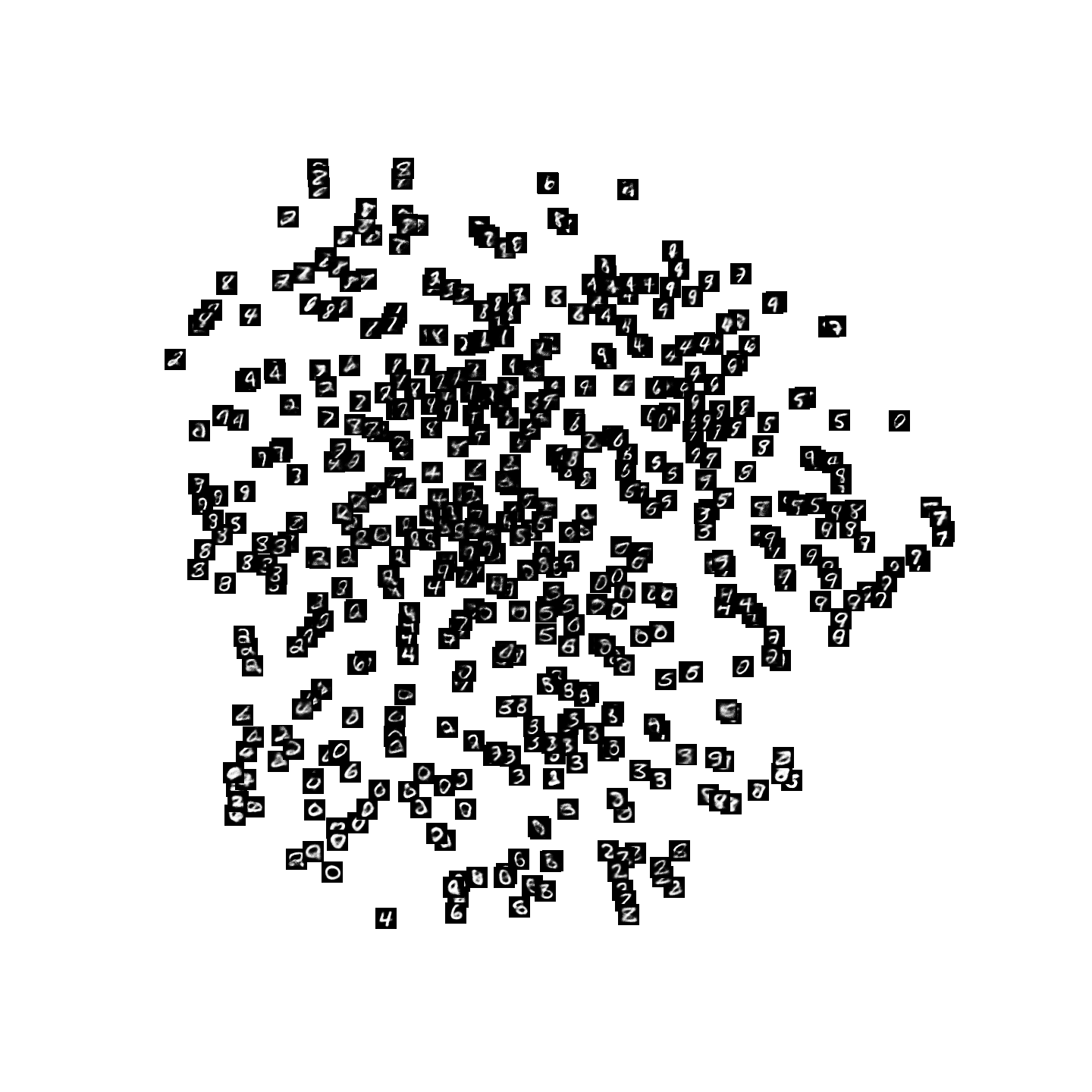}
    }
    \subfigure[GAN]{
    \includegraphics[width=0.45 \textwidth]{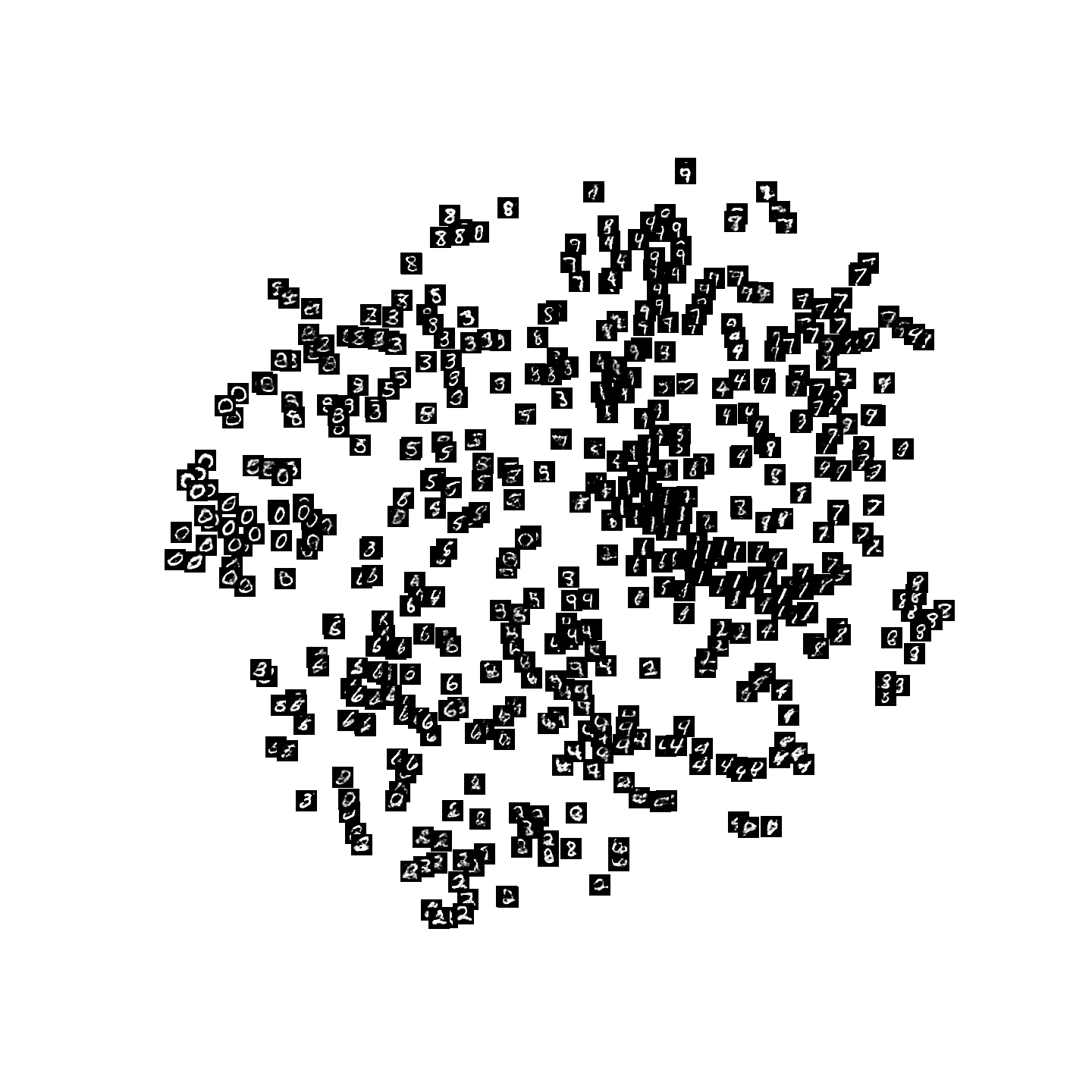}
    }
    \subfigure[SVAE]{
    \includegraphics[width=0.45 \textwidth]{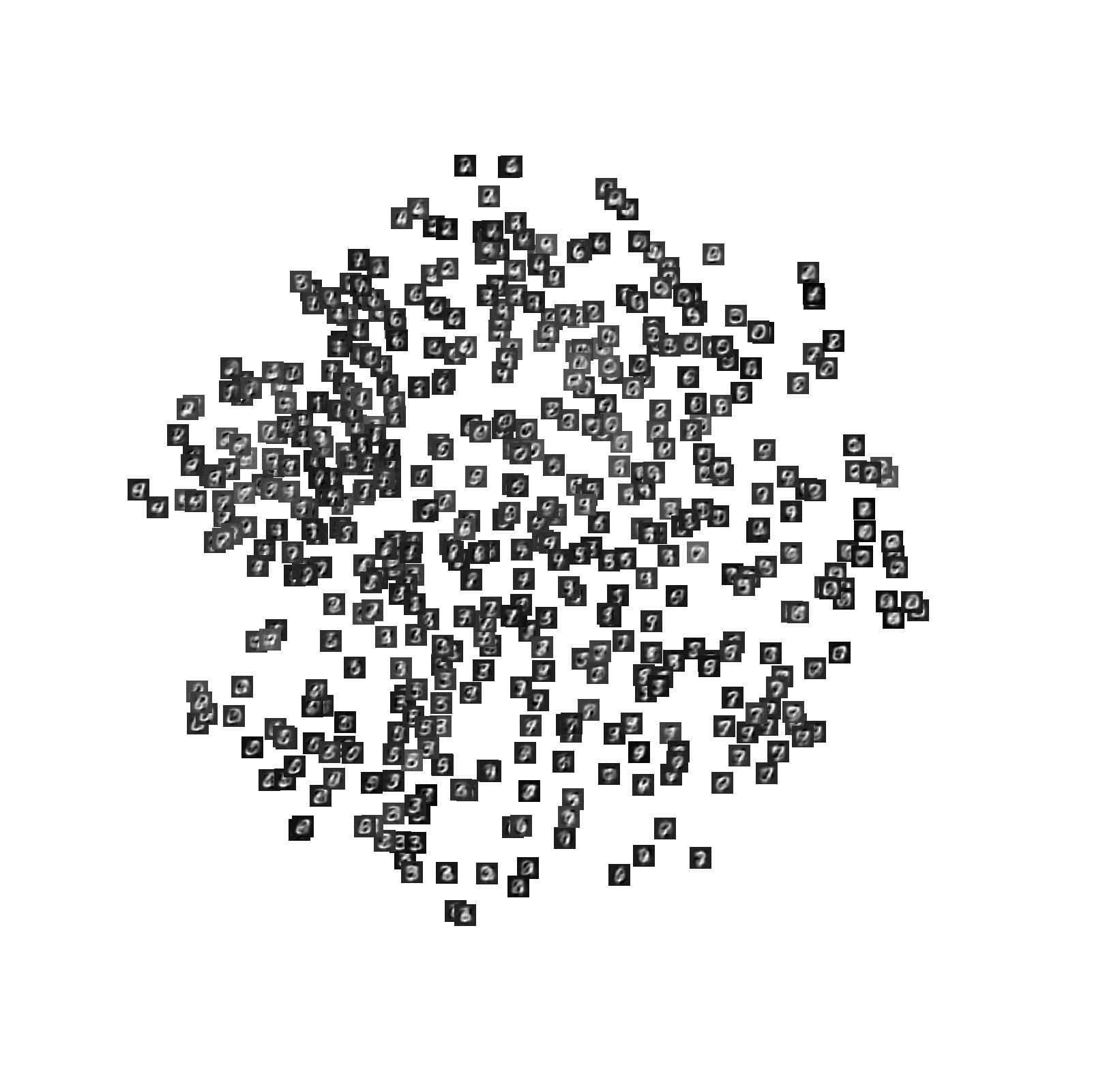}
    }
    \subfigure[VDAE]{
    \includegraphics[width=0.45 \textwidth]{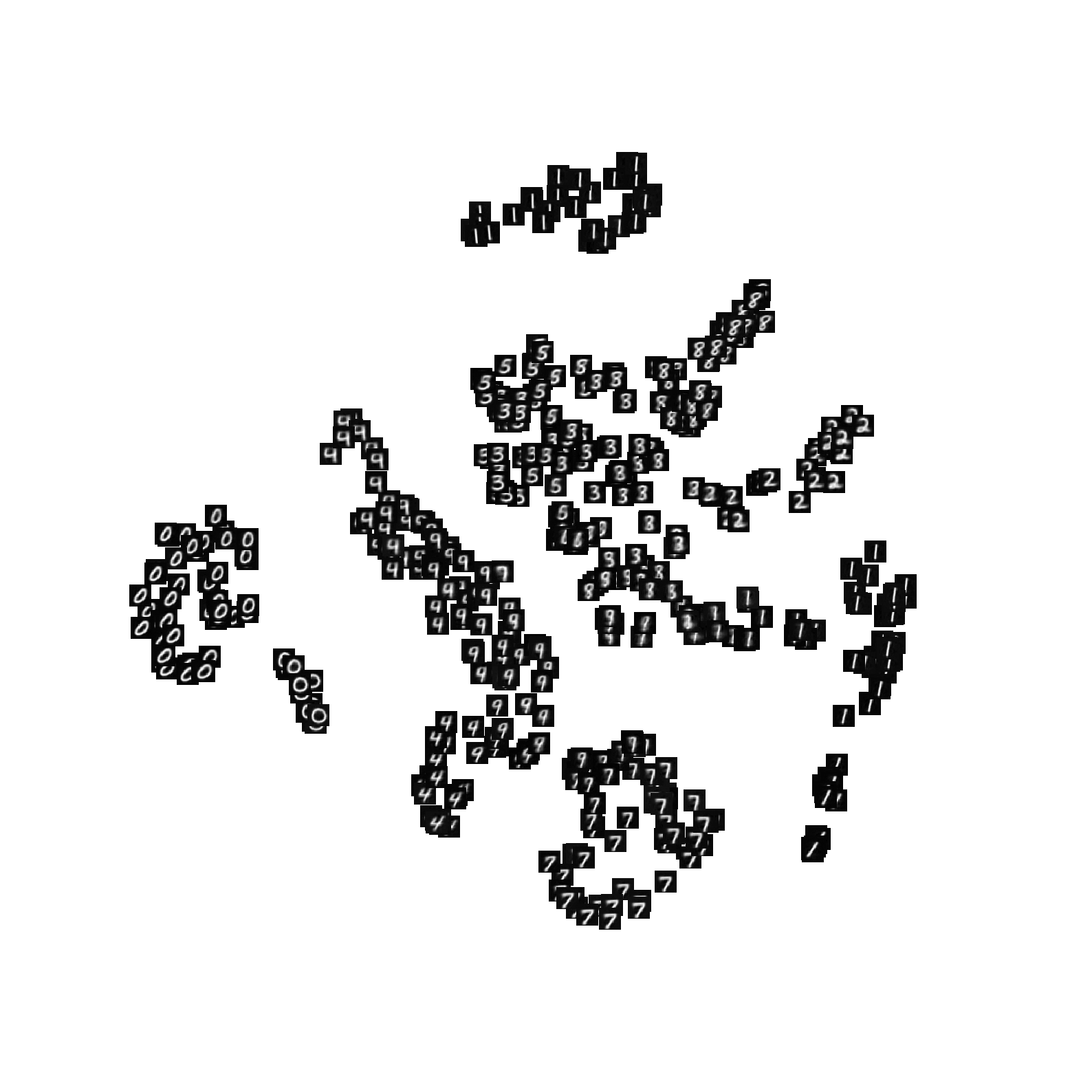}
    }
\caption{A tSNE plot of generated images from MNIST data set. Like with Fig. \ref{fig:frey} (Frey faces), the images generated by VAE, GAN, and SVAE have a unimodal distribution that does not capture the clustered structure of the MNIST dataset. VDAE, on the other hand, organizes the digits into clear clusters, and does not generate from regions where there is low support in the training distribution.} \label{fig:mnist}
\end{figure}

\begin{figure}[t]
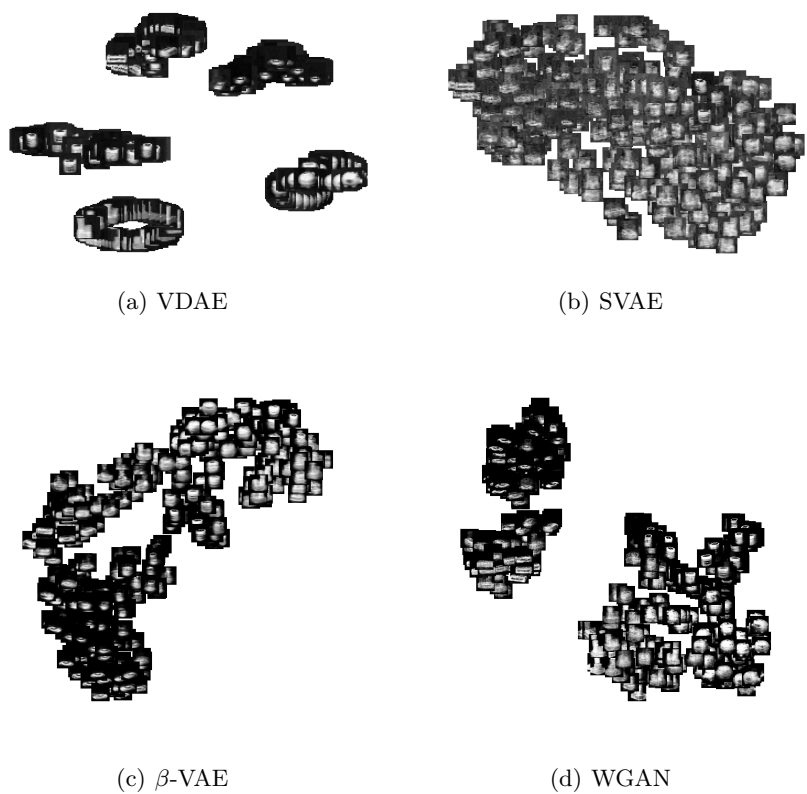

    \centering
  \subfigure[VDAE]{
     \includegraphics[width=5.5cm]{img/coil20/vdae.png}
     }
    \subfigure[SVAE]{
    \includegraphics[width=5.5cm]{img/coil20/svae.png}
    }
    \\
    \subfigure[$\beta$-VAE]{
    \includegraphics[width=5.5cm]{img/coil20/tsnecoilvae-new.png}
    }
    \subfigure[WGAN]{
    \includegraphics[width=5.5cm]{img/coil20/tsnecoilgan-new2.png}
    }
\caption{A tSNE embedding of $360$ generated images from COIL-20 data set.} \label{fig:coil20}
\end{figure}

\subsection{Experimental Architectures}
\label{sec:architectures}
For the circle, torus, Stanford bunny, Frey faces \footnote{https://cs.nyu.edu/~roweis/data.html}, and the 5x5 spherical density datasets, we used a single 500-unit hidden layer network for all models used in the paper (i.e. decoder, encoder, generator, discriminator, for the VAE, Wasserstein GAN, hyperspherical VAE, and our method).

As higher dimensional datasets, we used a slightly larger architecture for the MNIST, COIL-20, and rotating bulldog datasets: two hidden-layer decoder/generators of width 1024 and 2048, and two hidden-layer encoder/discriminators of width 2048 and 1024. All activations are still ReLU.

\end{document}